\documentclass[conference]{IEEEtran}

\usepackage{amsmath,amssymb,amsthm,mathtools}
\usepackage{enumitem}
\usepackage{xspace}
\usepackage[hidelinks]{hyperref}

\newcommand{\papertitle}{Transcript-Managed Transformers: Monotone Multi-Agent Collapse and Universality with Two Pop-Enabled Transcripts}
\newcommand{\paperauthor}{Sergey Salishev}
\newcommand{\paperkeywords}{Artificial intelligence, automata theory, finite-state machines, formal languages, multi-agent systems, transformers, Turing machines}

\hypersetup{
  pdftitle={\papertitle},
  pdfauthor={\paperauthor},
  pdfsubject={Computational expressivity of finite-precision transcript-managed transformers},
  pdfkeywords={\paperkeywords},
  pdfdisplaydoctitle=true
}

\newtheorem{theorem}{Theorem}
\newtheorem{corollary}{Corollary}
\newtheorem{proposition}{Proposition}
\newtheorem{lemma}{Lemma}
\theoremstyle{definition}
\newtheorem{definition}{Definition}
\newtheorem{remark}{Remark}

\newcommand{\TMT}{\mathsf{TMT}\xspace}
\newcommand{\TMTn}[1]{\mathsf{TMT}_{#1}}
\newcommand{\RTMT}{\mathsf{RTMT}\xspace}
\newcommand{\RTMTn}[1]{\mathsf{RTMT}_{#1}}
\newcommand{\OTMn}[1]{\mathsf{OTM}_{#1}}
\newcommand{\MONn}[1]{\mathsf{MON}_{#1}}
\newcommand{\FST}{\mathsf{FST}\xspace}
\newcommand{\DFA}{\mathsf{DFA}\xspace}
\newcommand{\DPDA}{\mathsf{DPDA}\xspace}
\newcommand{\DCFL}{\mathrm{DCFL}\xspace}
\newcommand{\CFL}{\mathrm{CFL}\xspace}
\newcommand{\RE}{\mathrm{RE}\xspace}
\newcommand{\F}{\mathbb{F}}
\newcommand{\Astk}{\mathcal{A}_{\mathrm{stk}}}
\newcommand{\Aapp}{\mathcal{A}_{\mathrm{app}}}
\newcommand{\Ck}{\mathcal{C}_{k}}
\newcommand{\PopContext}{\textbf{PopContext}}
\newcommand{\Trans}{\operatorname{Trans}}
\newcommand{\Acc}{\operatorname{Acc}}

\begin{document}

\title{\papertitle}

\author{%
\IEEEauthorblockN{\paperauthor}
\IEEEauthorblockA{%
\textit{AI Foundry}\\
San Francisco, CA, USA\\
\href{mailto:salishev@aifoundry.org}{\texttt{salishev@aifoundry.org}}}}

\maketitle

\begin{abstract}
We study transcript management for fixed, finite-precision causal Transformers.
A transcript is partitioned into channels of bounded blocks.  Each transition
consults a fixed visible suffix and may append one block, leaving the model,
weights, and token protocol unchanged.  The operation
$P_c:=\PopContext(c)$ deletes the newest block on channel $c$ and exposes its
predecessor.

We model the layer by the Transcript-Managed Transducer $\TMTn{k}$: one finite
controller, $k$ channels, and per-round actions from stay, push, and pop under a
caller-driven status map.  Fixed visible windows encode as finite symbols.  The
pop-free Restricted Transcript-Managed Transducer $\RTMTn{k}$ is the standard
append-only layer and, for every fixed $k$, realizes exactly the deterministic
finite-state transductions.  The same holds for every fixed finite agent
population under a monotone protocol that appends, routes, and copies visible
blocks.

Admitting $\{P_c\}_{c=1}^k$ restores pop.  Newest-first, a pop-enabled channel is
a stack; compiling to the Hopcroft--Ullman presentation transfers the classical
hierarchy: $\DCFL$ for $k=1$ and $\RE$ for every $k\ge2$.  Orchestrated
one-channel agents match one controller with $k$ channels, so two pop-enabled
transcripts---in one agent or two---suffice for universality.  Simulation costs
and invariance to fixed block size and visible radius are stated.  The bounds
fix precision, alphabets, blocks, visibility, controller state, and population;
growing exact context, hidden-block access, writable stores, and unbounded
\textbf{Spawn} add further state.
\end{abstract}

\begin{IEEEkeywords}
\paperkeywords.
\end{IEEEkeywords}

\section{Introduction}

Transformers are the standard architecture of modern AI~\cite{VaswaniEtAl2017,RadfordEtAl2019GPT2,BrownEtAl2020GPT3},
but deployed systems rarely use them as
isolated fixed-window sequence predictors.  Because consulting longer contexts increases memory
and inference costs, systems deliberately manage transcripts during execution.  Transcripts are
split, copied, delegated, resumed, compressed, retrieved, and occasionally deleted.
Reasoning scaffolds, tool loops, and multi-agent orchestration all rest on such transcript
management~\cite{YaoEtAl2023ReAct,WuEtAl2024AutoGen,HouEtAl2026AgenticLoops}.
This motivates a constructive question:

\begin{quote}
Which local transcript operation completes the abstract bounded-transcript model to a
universal machine, and can its semantics be added to a standard Transformer
without changing the token protocol or additional memory storage?
\end{quote}

Here a transcript is the sequence of bounded token blocks retained by the transcript manager
between model calls.  Each call receives the current input item and a fixed visible
suffix of each transcript channel.  The manager applies the returned transcript action.
The new transcript operation $P_c:=\PopContext(c)$ removes the newest block on channel $c$, so the
preceding block enters the next visible suffix.  Write $P:=P_1$.

Section~\ref{sec:model} defines the Transcript-Managed Transducer $\TMTn{k}$, the
normal form for this management layer.  One finite controller drives $k$ transcript
channels.  Each round consults the visible window of every channel together with the
current symbol, performs one action per channel from stay, push, and pop, and emits at
most one output symbol.  A status map marks each control state as an input round, an internal
round, or a halt, so the caller-supplied schedule is part of the model.  Write
$\TMT:=\TMTn{1}$.  The Restricted Transcript-Managed Transducer $\RTMTn{k}$ is the
pop-free case: every action is stay or push, which is the append-only layer of a standard
deployment.  Admitting $P_c$ on every channel lifts that restriction and returns
$\TMTn{k}$.

The channels are the parts of the one transcript that the deployment already keeps, and
$k$ counts how many of them admit pop.  A member of $\RTMTn{k}$ and a member of
$\TMTn{k}$ can share a controller, a block alphabet, a visible radius, and a forward
map; the second has one more available action.  A pop deletes a block and exposes the
block beneath it, so the retained transcript shortens.  Read newest block first, a
pop-enabled channel behaves as a stack.
Lemma~\ref{lem:classical-stack-normal-form} compiles the normal form above into
the Hopcroft--Ullman top-replacement presentation and back, with a bound on the steps per
classical transition; that compilation is what lets the classical one- and two-stack
classifications speak about $\TMTn{k}$, and two pop-enabled channels give universality.
Theorems~\ref{thm:exact-characterization}--\ref{thm:agent-stack-equivalence} are proved
directly in the normal form fixed here.

The abstract theorems classify the transcript manager.  Any fixed choice of attention,
depth, residual layers, and normalization supplies one finite local forward map under
the stated precision and visibility bounds.

We work under deployment constraints that stay fixed throughout:

\begin{enumerate}[leftmargin=1.5em]
\item finite precision and finite alphabets,
\item bounded transcript blocks and bounded exact visibility,
\item a fixed finite agent population with finite-state orchestration over agent transcripts,
\item the standard fixed, nontrainable execution schedule.
\end{enumerate}

\paragraph{Hierarchy summary}
Under these assumptions, write $\Trans(\mathcal C)$ for the input/output
transductions realized by a machine class $\mathcal C$,
$\Acc(\mathcal C)$ for its accepted languages,
$\MONn{m}$ for the class of monotone orchestrations with exactly $m$ agents, and
$\OTMn{k}$ for the class of Orchestrated Transcript Machines: $k$ one-channel agents under
one finite-state orchestration controller that owns their transcripts
(Definition~\ref{def:otm-family}).
For fixed $k$ and $m$, the transduction and acceptance classification is
\[
\begin{aligned}
\Trans(\RTMTn{k})
&=\Trans(\FST),\\
\Trans(\MONn{m})
&=\Trans(\FST),\\
\Trans(\OTMn{k})
&=\Trans(\TMTn{k}),\\
\Acc(\TMTn{1})
&=\DCFL,\\
\Acc(\TMTn{k})
&=\RE &&(k\ge2).
\end{aligned}
\]
We prove the three transduction identities.  The two acceptance identities are the
classical one- and two-stack classifications, transferred to this normal form by
Lemma~\ref{lem:classical-stack-normal-form} and recorded in
Corollary~\ref{cor:classical-acceptor-hierarchy} and
Theorem~\ref{thm:universal-augmentation-acceptance}.

\paragraph{Results and contributions}
This paper proves the fixed-population monotone collapse and the tagged-transcript normal
form with its correspondence to bounded-transcript controllers, including the equivalence
between $k$ orchestrated one-channel agents and one controller with $k$ channels together
with the simulation costs.
The one- and two-stack acceptance classifications are classical, and
Lemma~\ref{lem:classical-stack-normal-form} carries them into the normal form used here.

\begin{enumerate}[leftmargin=1.5em]
\item \emph{Monotone collapse.}
Every fixed finite collection of agents under a standard monotone protocol that may append,
route, and copy currently visible blocks realizes exactly the finite-state
transductions.  The theorem permits arbitrary finite controller logic and any sequence
of the allowed local and transcript operations.  Their joint visible configuration
forms a finite global summary that determines every future transition.  The same
summary also covers finite-branching nondeterministic acceptance and finite-precision
sampling under positive-probability acceptance.  The consequence for deployment is
concrete: with a fixed visible window, an append-and-copy agent population cannot parse a
general programming language, however large the retained transcript grows.
This limitation is visible in practice, through external grammars and symbolic checkers in
program synthesis~\cite{AlurEtAl2013,KobayashiEtAl2025}, context-free generalization
failures without stack memory~\cite{DeletangEtAl2023}, and unbounded repetition in
deployed agents~\cite{HouEtAl2026AgenticLoops}; our theorem states it directly for every
fixed population.

\item \emph{Transcript and agent correspondence.}
We show how a bounded channel-tagged physical transcript instantiates $\RTMTn{k}$ and
how a fixed-precision Transformer supplies its local transition map.  A separate
theorem equates the transductions of $k$ orchestrated one-channel agents with those of
one controller using $k$ pop-enabled channels.  An orchestrator that pops the transcripts
of two agents therefore has the power of one agent with two pop-enabled channels, which
covers both deployed layouts with one classification.

\item \emph{Exact hierarchy and invariance to fixed bounds.}
Every fixed visible radius can be encoded in one symbol from a finite enlarged
alphabet (Proposition~\ref{prop:bounded-transcript-correspondence}).  One
$\TMTn{k}$ round costs at most $k$ orchestration steps, one Turing-machine step costs
$O(1)$ two-agent steps, and $t$ stack operations use $O(t)$ transcript blocks
(Proposition~\ref{prop:simulation-overhead}).  One pop-enabled channel gives the
deterministic context-free languages; every fixed $k\ge2$ gives the recursively
enumerable languages, and additional channels may remain unused.  All effective
universal augmentations accept this same language class.
\end{enumerate}

The intermediate $\DPDA$ level is the first practical step beyond finite-state behavior
once exact pop is available: deterministic parsing~\cite{Knuth1965} and structured
instruction languages~\cite{Kuhn2014,VeizagaEtAl2021}.

\paragraph{Scope}
The upper bounds apply when the exact state consulted by one transition ranges over
a finite set.  Bounded windows, finite-state compaction, and bounded exact transcript
visibility satisfy this condition.
If $\F$ is the finite floating-point set, then every positional map
\[
p:\mathbb{N}\to \F^d
\]
has image
\[
\operatorname{im}(p):=\{p(t):t\in\mathbb N\}\subseteq\F^d,
\]
the set of positional vectors produced by $p$, so
\[
|\operatorname{im}(p)| \le |\F|^d < \infty.
\]
The bound comes from the precision of $\F$ alone, and no hypothesis on the rule generating
$p$ is required.  Because the domain $\mathbb N$ is infinite while $\operatorname{im}(p)$
is not, $p$ must repeat addresses, so position by itself supplies no unbounded consulted
state however far the index runs.  Under exact real or unbounded-precision arithmetic the
same map may take infinitely many values and this bound fails.
Models that later consult an exact unbounded sequence, an unbounded writable store, or
an unbounded population of spawned agents have a growing consulted-state space.
Standard read-only RAG into a bounded visible window supplies bounded external input
\cite{LewisEtAl2020RAG,OrenEtAl2024}.
The same finite-state assumption appears in formal analyses of restricted
transformers~\cite{Hahn2020,MerrillSabharwalSmith2022,MerrillSabharwal2023} and in classical
memory-bounded computation~\cite{StearnsHartmanisLewis1965}; see
Section~\ref{sec:related-work}.

\section{Related Work}
\label{sec:related-work}

The computational power of attention-based architectures has been studied from several
directions.  At the architectural level, the vanilla Transformer of
Vaswani et al.~\cite{VaswaniEtAl2017} does not prescribe how a deployment must manage
context size beyond attention.  Extensions with unbounded retained context can therefore
look like natural continuations of the same architecture even though they introduce a
different computational resource.

The following literature provides architectural upper bounds and broader context for our
contribution.

\paragraph{Transformer upper bounds}
Here $n$ denotes input length.
Hahn's fixed-layer restricted self-attention is not universal: it cannot recognize even
some regular languages~\cite{Hahn2020}.  In the same hard-attention regime, Hao, Angluin,
and Frank place unique-hard attention within $\mathsf{AC}^0$, the constant-depth
polynomial-size circuits of unbounded-fan-in Boolean gates, while averaging-hard attention
already recognizes the non-$\mathsf{AC}^0$ languages \textsc{Majority} and
\textsc{Dyck}-1~\cite{HaoAngluinFrank2022}.  Since $\mathsf{AC}^0$ excludes parity, the
unique-hard bound is strictly stronger than the threshold-circuit bounds that follow.
The saturated-transformer result is a nonuniform
constant-depth threshold-circuit upper bound and classifies a circuit family
\cite{MerrillSabharwalSmith2022}.  For $O(\log n)$-bit arithmetic,
logspace-uniform threshold-circuit simulations imply, under the standard conjecture that
deterministic logarithmic space is weaker than deterministic polynomial time, that one forward
pass misses some polynomial-time problems and therefore is not universal~\cite{MerrillSabharwal2023}.
Strobl et al.\ survey these and other formal-language results, emphasizing that their
apparently conflicting conclusions depend on precision, positional encodings, attention
variants, and uniformity assumptions~\cite{StroblEtAl2024Survey}.

\paragraph{Neural formal-language learning}
Bhattamishra et al.\ give Transformer constructions for a restricted family of counter
languages and empirically evaluate formal-language recognition, including
$a^n b^n c^n$~\cite{BhattamishraEtAl2020}.  Their positive finite-range result
uses the entire growing prefix and normalized prefix counts; their stated
finite-precision guarantee for this construction is bounded by the available bits.
Their construction uses growing consulted context.  Our classification fixes the
visible context.  Their finite-range result also supports the empirical possibility of
learning a two-counter procedure.
Del\'etang et al.\ train RNNs, LSTMs, Transformers, and stack- or tape-augmented
RNNs on transduction tasks spanning the Chomsky hierarchy~\cite{DeletangEtAl2023}.
Their stack and tape action policies are learned end-to-end, but both memory modules
are attached to RNN controllers; their evaluated Transformer is unaugmented.
That baseline enters Table~\ref{tab:stack-compare} as the row retaining no memory at all:
Theorem~\ref{thm:exact-characterization} fixes it at finite-state, and the deterministic
context-free tasks it misses are exactly those one pop-enabled channel supplies.
Length generalization reported in this literature is partial, and two separate conditions
govern it: the architecture must be able to represent the required memory discipline, and
training must actually find that representation.  Bhattamishra et al.\ show that gradient
descent biases Transformers toward simple, low-sensitivity functions, so the second
condition can fail even where the first holds~\cite{BhattamishraSimplicityBias2023}.
Our expressivity results speak only to the first.

\paragraph{Relation to classical stack models}
Classical memory bounds already induce strict computational hierarchies
\cite{StearnsHartmanisLewis1965}.  At the language-acceptance level, one stack gives
$\DCFL$ and two stacks give $\RE$; equivalently, the two-stack machine model is Turing
complete~\cite{Minsky1967,HopcroftUllman}.
We reach these results through a transducer normal form built for the deployed Transformer interface.
Lemma~\ref{lem:classical-stack-normal-form} compiles that form into the
Hopcroft--Ullman top-replacement presentation and back with step bounds, which is where
the classical acceptance results enter.  Our transduction theorems are proved in the
new normal form itself, and we identify PopContext with the pop action omitted by the
append-only management layer.
Our abstract results establish the tagged-transcript correspondence and the monotone
multi-agent collapse under fixed finite consulted state
(Definition~\ref{def:finite-consulted-state}).
PopContext is not the only conceivable memory primitive, and related classical intermediate variants
delimit the space of alternatives: pushdown transducers, visibly pushdown
automata~\cite{AlurMadhusudan2004}, counter machines~\cite{Minsky1967}, and ordered
multi-pushdown systems~\cite{Atig2012MultiPushdown}.  The maximality discussion in
Section~\ref{sec:main-results} is accordingly stated at the language-acceptance level for
effective universal augmentations, not as a claim that PopContext is the unique smallest
extension increasing the language acceptance power of the Transformer.

\paragraph{Post-training automaton extraction}
Weiss et al.\ take a fixed, already-trained RNN acceptor and use it as the membership
oracle for Angluin's $L^*$ algorithm~\cite{WeissGoldbergYahav2018}.  A finite
abstraction of the RNN is compared with each candidate deterministic finite automaton
($\DFA$); a disagreement yields either a counterexample to the $\DFA$ or a refinement of
the abstraction after consulting the RNN.  The result is a $\DFA$ extracted after neural
training.
Its relevance here is limited to the pattern of extracting an automaton after
neural training.  At the acceptance level, a $\DFA$ is a $\DPDA$ that never uses its
stack, and the inclusion is strict~\cite{HopcroftUllman}.  An $L^*$-style extraction
therefore reaches exactly the append-only level of
Theorem~\ref{thm:exact-characterization} and supplies no method for extracting a $\DPDA$
from a general $\TMTn{1}$ acceptor.

The remaining items are organized by which resource is allowed to grow and whether that
growth yields universality. To keep heterogeneous results comparable, we read each cited model along the 
axes stressed by the survey of Strobl et al.~\cite{StroblEtAl2024Survey}: numerical precision; whether
memory grows with input or runtime; whether that memory is writable and exactly accessible;
whether the control policy is learned or supplied; and whether the stated result concerns recognition, 
transduction, or finite-range learning. Table~\ref{tab:stack-compare} summarizes the comparison.
Its \emph{Task} column gives the generic setting a work evaluates, and \emph{Growing
Memory} names the resource the mechanism allows to grow.  \emph{Access} describes the
deployed read and update: soft when they are continuous or latent, exact when push and pop
are hard, and lossy when retained content is replaced by a summary; a qualifier such as
append-only records an operation the mechanism omits, and a dash marks a mechanism with no
such memory to access.
\emph{Representative power} is the language class of that growing
memory taken in its idealized exact form, so it states what the mechanism can express
rather than what the cited work claims or demonstrates. Turing completeness is recorded as $\RE$.
The last two columns therefore say together whether growth buys anything.
This reading makes mechanisms comparable across works whose evidence is otherwise
incommensurable.
Here $\DCFL\subsetneq\CFL$ separates the deterministic from the nondeterministic stack
relaxations, and a second exact unbounded store lifts either to $\RE$.  Every row keeps a
finite controller, so the comparison stays within deployable architectures rather than the
unbounded-precision constructions of \cite{PerezBarceloMarinkovic2021}.

\begin{table*}[t]
\caption{Stack- and memory-augmented neural models, grouped by controller.  Columns are
defined in the text; power is that of the growing memory in its idealized exact form, not
a claim of the cited work.}
\label{tab:stack-compare}
\centering
\renewcommand{\arraystretch}{1.25}
\begin{tabular}{@{}p{2.6cm}p{2.5cm}p{1.7cm}p{2.8cm}p{2.2cm}p{3.4cm}@{}}
\hline
Work & Task & Controller & Growing Memory & Access & Representative power\\
\hline
Grefenstette et al.~\cite{GrefenstetteEtAl2015}
  & Sequence transduction & RNN & Differentiable stack, queue, DeQue & Soft & $\DCFL$ (stack), $\RE$ (queue, DeQue)\\
Joulin and Mikolov~\cite{JoulinMikolov2015}
  & Algorithmic sequences & RNN & Differentiable stacks & Soft & $\DCFL$ (stack), $\RE$ (two stacks)\\
DuSell and Chiang~\cite{DuSellChiang2020}
  & Language recognition & RNN & Nondeterministic stack & Soft & $\CFL$\\
Chung and Siegelmann~\cite{ChungSiegelmann2021}
  & Machine simulation & RNN & Two unbounded stacks & Exact & $\RE$\\
\hline
Stack Attention~\cite{DuSellChiang2024}
  & Language recognition & Transformer & Stack inside attention & Soft & $\CFL$\\
Pushdown Layers~\cite{MurtyEtAl2023}
  & Language modeling & Transformer & Stack-like tape & Soft & $\DCFL$\\
StackTrans~\cite{ZhangStackTrans2025}
  & Chomsky hierarchy, reasoning & Transformer & Hidden-state stack & Soft & $\DCFL$\\
Oren et al.~\cite{OrenEtAl2024}
  & Long-context modeling & Transformer & Key-value cache & Exact, append-only & $\RE$\\
Transformer-XL~\cite{DaiEtAl2019TransformerXL}
  & Long-context modeling & Transformer & Cached segment states & Exact & Finite-state\\
Compressed memory~\cite{RaeEtAl2019Compressive,DaiEtAl2025}
  & Long-context modeling & Transformer & Context & Lossy & Finite-state\\
Unaugmented Transformer~\cite{DeletangEtAl2023}
  & Sequence transduction & Transformer & None & --- & Finite-state\\
\hline
This work
  & Sequence transduction & Any finite controller & Tagged transcript & Exact & $\DCFL$ ($k=1$), $\RE$ ($k\ge2$)\\
\hline
\end{tabular}
\end{table*}

\paragraph{Unbounded exact state}
P\'erez, Barcel\'o, and Marinkovic prove Turing completeness using arbitrary-precision
rational representations~\cite{PerezBarceloMarinkovic2021}.  This is outside fixed
precision.  Bhattamishra, Patel, and Goyal likewise analyze Transformer computational power
and universality under explicit model-specific
assumptions~\cite{BhattamishraPatelGoyal2020}.
The relevant classical caveat is that even two unbounded counters (equivalently,
two integer registers with suitable counter operations) already suffice for Turing
completeness~\cite{Minsky1967}; a fixed register count is finite-state only when register
values range over a fixed finite set.  Universal Transformers add recurrence and obtain
stronger capabilities under their own assumptions~\cite{DehghaniEtAl2018}.
Oren et al.\ cast decoder-only transformers as multi-state RNNs whose growing
key-value cache provides an unbounded exact retained sequence~\cite{OrenEtAl2024};
this holds in the idealized unbounded-length formulation, whereas a real finite-precision
deployment with a compressed or bounded cache remains finite-state in our formalization.
An unbounded exact sequence that each step may read and extend is a universal resource,
since writing successive machine configurations to it simulates a Turing machine step by
step, which is how the chain-of-thought simulations of
\cite{MerrillSabharwal2024CoT,LiWang2026EfficientTM} proceed; Table~\ref{tab:stack-compare}
records $\RE$ for that idealized cache and not as a claim of theirs.
Their own experiments point the same way: fixing the cache size yields a bounded
multi-state recurrent model that performs nearly on par with the full one.  The growing
memory recorded for them in Table~\ref{tab:stack-compare} is therefore the idealized
uncapped cache, not the one their experiments retain.

\paragraph{Differentiable stacks}
Differentiable stacks introduce another form of exact state in their idealized
temperature $\tau>0$ formulation: continuous block strengths range over exact real values.
Grefenstette et al.\ introduced continuous block strengths, newest-first fractional pop,
and unit-mass top reads, and report that the resulting models often recover the underlying
generating algorithm of a synthetic transduction grammar~\cite{GrefenstetteEtAl2015}.
That work instantiates the same machinery as a queue and as a double-ended queue; taken
exactly, either is Turing-equivalent, a queue by the standard queue-machine simulation of a
tape and a double-ended queue by exposing two stack ends at once, which is the second entry
recorded in its row.
Joulin and Mikolov proposed a distinct superposition stack with soft push, pop, and no-op,
and learn algorithmic sequences that require counting and
memorization~\cite{JoulinMikolov2015}.  They also run configurations holding several
stacks, and two exact unbounded stacks already suffice for Turing
completeness~\cite{Minsky1967,HopcroftUllman}, which is the second entry in their row.
DuSell and Chiang replaced the deterministic
relaxation by a weighting over nondeterministic stack runs, which reaches the context-free
class on language recognition~\cite{DuSellChiang2020}.  The deterministic variants provide deterministic pushdown power, hence $\DCFL$, and the
nondeterministic variants reach the full context-free class $\CFL$, which strictly
contains it.
The same relaxations were later carried into Transformer attention, where the
nondeterministic variant again represents $\CFL$~\cite{DuSellChiang2024}, and Chung and
Siegelmann extend their construction to
Grefenstette-style neural stacks~\cite{ChungSiegelmann2021}.
These are readings of the idealized exact structure, not claims made by the cited works;
each soft relaxation is evaluated on learning, and none of them establishes the class
recorded in Table~\ref{tab:stack-compare}.

\paragraph{Tape and stack-augmented RNN and Transformers}
With RNN controllers, the stack-augmented networks of Joulin and
Mikolov~\cite{JoulinMikolov2015} and of DuSell and Chiang~\cite{DuSellChiang2020} learn
algorithmic and context-free patterns, and the tape-augmented models evaluated above sit in
the same family~\cite{DeletangEtAl2023}.  Chung and Siegelmann prove Turing completeness for
a bounded-precision RNN with two dynamically growing stack
modules~\cite{ChungSiegelmann2021}; their universality uses two unbounded stores with exact
push and pop behavior rather than the relaxation.  With Transformer controllers, Stack
Attention places the store inside the attention operator and is evaluated on language
recognition~\cite{DuSellChiang2024}, Pushdown Layers maintain a stack-like tape synchronously
during autoregressive prediction to modulate attention, improving syntactic generalization in
language modeling~\cite{MurtyEtAl2023}, and StackTrans inserts a differentiable stack of
hidden states between Transformer layers rather than along the transcript, motivated
explicitly by the pushdown automaton as the minimal model for deterministic context-free
grammars and evaluated on Chomsky-hierarchy tasks as well as natural-language reasoning
benchmarks~\cite{ZhangStackTrans2025}.
Both Transformer-side stores are driven by soft push, pop, and no-op updates on a single
stack, so each idealizes to a deterministic pushdown store and is recorded at $\DCFL$.
StackTrans is instructive here because its deployed form returns to our finite-state regime
in two independent ways.  Its stack is allocated at a fixed size $S$, small in their
experiments, with overflow truncated to zero, which the authors describe as a form of
forgetting; and training parallelism is recovered by breaking the temporal dependency, so
that stack operations run over the layers of a single token rather than along the token
sequence.  Either restriction alone confines the reachable stack configurations to a fixed
finite set, so Definition~\ref{def:finite-consulted-state} is satisfied and
Theorem~\ref{thm:exact-characterization} applies.  The $\DCFL$ entry is therefore the
idealization the authors themselves state, an unbounded stack driven along the token
sequence, and not the model they train.
The evaluated task therefore matters for reading these rows: a language-class theorem and a
benchmark gain are not comparable evidence, so Table~\ref{tab:stack-compare} records the
task and the controller alongside the idealized power rather than collapsing them.
Our model differs from both lines in that the store is the tagged Transformer transcript
itself rather than a dedicated soft or latent memory module: each channel admits exact
push, stay, and pop in the $\TMTn{k}$ normal form.

\paragraph{Memory and retrieval}
Transformer-XL introduces segment-level recurrence over cached hidden
states~\cite{DaiEtAl2019TransformerXL}, whose span Rae and Razavi then vary to ask how much
of that memory the model actually needs~\cite{RaeRazavi2020}.  Separately, compressive
and embedding-based memories replace retained history by a finite
summary~\cite{RaeEtAl2019Compressive,DaiEtAl2025}.
In both, the retained context keeps growing as further history is folded in, but what a
step may consult of it does not: the cached span is fixed, and the summary has a fixed
number of slots that successive compressions overwrite.  Table~\ref{tab:stack-compare}
therefore lists a growing memory for these rows and still records them as finite-state.
How far the raw transcript grows does not affect the classification, because a standard
Transformer consults only a bounded visible window: the exact state entering one
transition ranges over a finite set whatever is retained behind it, and
Theorem~\ref{thm:exact-characterization} applies.  Compaction and compression therefore
change retention cost, not computational class.
Length generalization also depends on positional
encoding~\cite{PressEtAl2021ALiBi,KazemnejadEtAl2023PE}: under fixed floating-point arithmetic,
rotary encodings~\cite{SuEtAl2024RoFormer} are constrained by the RoPE
base~\cite{MenEtAl2024RoPEBase} and may lose positional distinctions when phases become
indistinguishable~\cite{Liu2026RoPEPhase}.  None of these works presents a Turing-universality
result.  With fixed precision and fixed cache, slot, and window bounds, their models remain
finite-state in our formalization.  Allowing retained activations or memory slots to grow
without bound and remain exactly consultable supplies growing consulted state.
A universality result additionally requires suitable state transitions.
The same resource distinction applies to retrieval.  Standard RAG retrieves a bounded chunk
from a read-only document store into the visible window.  Because
the store is read-only and the retrieved chunk is bounded, it supplies bounded external
input~\cite{LewisEtAl2020RAG}.  An exact store that the agent can update and
later query supplies persistent state and can support universal computation given suitable
read/write addressing.

\paragraph{Iteration and CoT}
Merrill and Sabharwal show that generalized pre-norm Transformers with a polynomially
bounded chain of thought recognize exactly the decision problems in
$\mathrm{P}$~\cite{MerrillSabharwal2024CoT}.  Their model lets each step consult a
growing transcript, uses logarithmic precision, and places a polynomial bound on the
chain-of-thought length.  Feng et al.\ show added effective depth for bounded written
derivations~\cite{FengEtAl2023CoT}.  Our theorem fixes the exact consulted state of
each step.  Under this condition, append-only CoT eventually cycles
(Corollary~\ref{cor:cot-cycle}).  This predicts eventual repetition of
tool-and-thought cycles observed in deployed agents~\cite{HouEtAl2026AgenticLoops}.

\paragraph{Context, agents, and external tools}
Li and Wang simulate space-$s(n)$ Turing machines with constant-bit Transformers using
an $O(s(n))$ context and $O(s(n)^c)$ chain-of-thought steps per simulated step, for
arbitrarily small fixed $c>0$~\cite{LiWang2026EfficientTM}.  Their exact transcript
grows with the simulated space.  Our monotone-collapse theorem fixes the exact consulted
state, so the two results describe different context regimes.
Rizvi-Martel et al.\ study
how agent count, bandwidth, and communication depth affect multi-agent reasoning
tasks~\cite{RizviMartelEtAl2025MultiAgent}.  Our theorem classifies every fixed finite
population governed by arbitrary finite controller logic and the stated local-call,
append, routing, and visible-copy operations.  Tool and function calls may supply
computation oracles, but their computational effect depends entirely on the interface: a
tool may be a finite-state transducer, a computable subroutine, a read-only database, a
writable store, or an undecidable
oracle~\cite{TiwariNalliDeshpande2026ToolOracles}.  Our bounds assume the finite-state and
read-only cases; writable store interface or undecidable tools add a separate computational resource.

\section{Model}
\label{sec:model}

\subsection{Transcript-Managed Transducer}

We first define the Transcript-Managed Transducer $\TMTn{k}$, the deterministic
transducer normal form used throughout the paper.  The deployed Transformer interface
shapes it: one round per model call, one transcript action per channel and at most one
output symbol per round, one written block per push, a status map that marks input rounds and internal
rounds, and a right-end marker with an explicit completed-run convention.  A pop on an
empty channel is a no-op, matching a transcript that stops at empty.
Lemma~\ref{lem:classical-stack-normal-form} compiles this form into the
Hopcroft--Ullman top-replacement presentation of the pushdown and multi-pushdown
models~\cite{HopcroftUllman,Atig2012MultiPushdown} and back, with a step bound in each
direction, so a result stated for either form applies to both.  The theorems of
Section~\ref{sec:main-results} work in the form fixed here; that compilation is what
carries the classical acceptance classifications across.
The symbols $\bot$ and $\dashv$ are distinguished internal-step and right-end markers,
respectively, outside every input alphabet.

\begin{definition}[Transcript-Managed Transducer $\TMTn{k}$]
\label{def:k-stack-transducer}
Define the stack-control alphabet
\[
\Astk
:=
\{\mathrm{stay},\mathrm{push},\mathrm{pop}\}.
\]
Fix $k\ge 1$.  A \emph{$k$-channel Transcript-Managed Transducer} is a tuple
\[
A := (Q_A,\Sigma,\Omega,\Lambda,\#,\delta_A,\chi_A,q_0),
\]
where $Q_A$ is a finite nonempty set of control states,
$q_0\in Q_A$ is the initial control state,
$\Sigma$ and $\Omega$ are finite input and output alphabets,
$\Lambda$ is a finite nonempty stack alphabet, and
$\# \notin \Lambda$ is the bottom marker.
The status map
\[
\chi_A:Q_A\to\{\mathrm{input},\mathrm{internal},\mathrm{halt}\}
\]
determines whether the next round consumes input or is internal, and marks halting
states.  It is a fixed component of the abstract transduction semantics.
Set
\[
\Sigma_\bot:=\Sigma\cup\{\dashv,\bot\},
\qquad
Q_A^\circ
:=
\chi_A^{-1}(\{\mathrm{input},\mathrm{internal}\}),
\]
and let
\[
\delta_A :
Q_A^\circ \times \Sigma_\bot \times (\Lambda \cup \{\#\})^k
\to
Q_A \times \Astk^k
\times\Lambda^k\times(\Omega\cup\{\varepsilon\}).
\]

On input $w\in\Sigma^*$, the initial configuration is
\[
(q_0,w\dashv,\#,\dots,\#).
\]
A configuration records a control state, the unread suffix of $w\dashv$, and $k$ stack
words in $\#\Lambda^*$.
If $\chi_A(q)=\mathrm{input}$, the round supplies and consumes the first unread symbol;
if $\chi_A(q)=\mathrm{internal}$, it supplies $\bot$ without consuming input.
The transducer applies $\delta_A$ to that symbol and the $k$ top symbols, performs the
returned actions simultaneously, and emits the returned output; $\varepsilon$ is silent.
The rightmost symbol of each stack word is its top.  Stay leaves that word unchanged,
push appends its returned write symbol, and pop removes the rightmost symbol when it lies
above $\#$.
If the action and write tuples are
$\mathbf a=(a_1,\dots,a_k)$ and $\boldsymbol\lambda=(\lambda_1,\dots,\lambda_k)$,
then action $\mathrm{push}$ on stack $c$ writes $\lambda_c$; the value $\lambda_c$ is
ignored for $\mathrm{stay}$ and $\mathrm{pop}$.  A pop on $\#$ leaves that stack
unchanged.

A run stops exactly on entering a state $q$ with $\chi_A(q)=\mathrm{halt}$.
The stopped run is \emph{complete} exactly when $\dashv$ has been consumed, so the unread
suffix is empty;
a premature halt, an input request with empty unread suffix, or an infinite run defines
no completed transduction.  The transducer's output on a completed run is the
concatenation of its nonsilent emissions.

For language recognition, partition the halting states into accepting and rejecting sets,
\[
\chi_A^{-1}(\mathrm{halt})
=
Q_A^{\mathrm{acc}}\mathbin{\dot\cup}Q_A^{\mathrm{rej}}.
\]
A completed run accepts or rejects according to the subset containing its final state.
This specialization is a $\TMTn{k}$ acceptor; it is a decider when every input completes
in one of the two subsets.
For fixed $k$, let $\TMTn{k}$ also denote the class of these instances, and write
$\TMT:=\TMTn{1}$.
Word $c$ of the configuration is transcript channel $c$, and its rightmost symbol is the
block currently exposed to the controller.  Read newest block first, channel $c$ behaves
as a stack over $\Lambda$, and for $k=1$ the acceptor is a deterministic pushdown
acceptor.  We use stack vocabulary for the three actions throughout, with the
understanding that each channel holds the blocks the deployment already retains.
Dropping the stack alphabets, stack arguments, and stack actions gives the ordinary
deterministic finite-state transducer under the same input, stopping, and completion
conventions; write $\FST$ for this class.
\end{definition}

\begin{definition}[Hopcroft--Ullman top-replacement presentation]
\label{def:hu-k-stack}
Take the Hopcroft--Ullman top-replacement transition form coordinatewise over $k$
stacks, as in the standard multi-pushdown model
\cite{HopcroftUllman,Atig2012MultiPushdown}.
A deterministic, output-bearing instance is
\[
H:=(Q,\Sigma,\Omega,\Lambda,\#,q_0,Q^{\mathrm{acc}},Q^{\mathrm{rej}},
\delta_{\mathrm{HU}}),
\]
where the finite sets and distinguished symbols have the meanings in
Definition~\ref{def:k-stack-transducer},
$Q^{\mathrm{acc}}\cap Q^{\mathrm{rej}}=\varnothing$, and, writing
$\Lambda_\#:={\Lambda\cup\{\#\}}$ and
$\overline\Sigma:=\Sigma\cup\{\dashv\}$,
\[
\begin{split}
\delta_{\mathrm{HU}}:\;&
\bigl(Q\setminus(Q^{\mathrm{acc}}\cup Q^{\mathrm{rej}})\bigr)
\times(\overline\Sigma\cup\{\varepsilon\})\times\Lambda_\#^k\\
&\rightharpoonup
Q\times(\Lambda_\#^*)^k\times(\Omega\cup\{\varepsilon\}).
\end{split}
\]
If
\[
\delta_{\mathrm{HU}}(q,a,\gamma_1,\dots,\gamma_k)
=
(q',u_1,\dots,u_k,o),
\]
then $a\in\overline\Sigma$ consumes $a$, whereas $a=\varepsilon$ consumes nothing,
and each stack $\beta_c\gamma_c$ becomes $\beta_cu_c$.
To preserve the unique bottom marker, require $u_c\in\Lambda^*$ when
$\gamma_c\ne\#$ and $u_c\in\#\Lambda^*$ when $\gamma_c=\#$.
Determinism means that the map is single-valued and, whenever its
$\varepsilon$-transition is defined for $(q,\gamma_1,\dots,\gamma_k)$, no
input-consuming transition is defined for the same state and top tuple.
The run starts from $(q_0,w\dashv,\#,\dots,\#)$, stops on entering
$Q^{\mathrm{acc}}\cup Q^{\mathrm{rej}}$, and is complete exactly when $\dashv$ has
been consumed.  A missing transition or an infinite run is incomplete; outputs and
acceptance follow Definition~\ref{def:k-stack-transducer}.
\end{definition}

\begin{lemma}[Hopcroft--Ullman multi-stack normal form]
\label{lem:classical-stack-normal-form}
For every fixed $k\ge1$, Definitions~\ref{def:k-stack-transducer}
and~\ref{def:hu-k-stack} realize exactly the same completed transductions and accept
exactly the same languages.  Let $L$ be the length of the longest replacement word in a
given $\delta_{\mathrm{HU}}$ table.  One transition of that table compiles into at most
$2+k(L+1)$ rounds of Definition~\ref{def:k-stack-transducer}, and one round of
Definition~\ref{def:k-stack-transducer} compiles into one transition of a
$\delta_{\mathrm{HU}}$ table with $L\le2$.
\end{lemma}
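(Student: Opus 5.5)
We prove the lemma by two explicit compilations, one in each direction. In each we check that completed runs correspond step for step, with the same emitted output word and the same halting subset. That gives equality of completed transductions, and equality of accepted languages follows by restricting attention to the final state.

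\emph{From $\TMTn{k}$ to Hopcroft--Ullman.} Given $A$, take $Q:=Q_A$ and let the halting states of $A$ be $Q^{\mathrm{acc}}\cup Q^{\mathrm{rej}}$. For transduction we split them arbitrarily. For each $q\in Q_A^\circ$, top tuple $\gamma$ and symbol $a$ with $\delta_A(q,a,\gamma)=(q',\mathbf a,\boldsymbol\lambda,o)$, define $\delta_{\mathrm{HU}}$ as follows. If $\chi_A(q)=\mathrm{input}$, the transition reads $a\in\overline\Sigma$. If $\chi_A(q)=\mathrm{internal}$, it is the $\varepsilon$-transition computed from $a=\bot$. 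The replacement words are chosen coordinatewise:
\begin{itemize}
\item stay gives $u_c=\gamma_c$;
\item push gives $u_c=\gamma_c\lambda_c$;
\item pop gives $u_c=\varepsilon$ when $\gamma_c\ne\#$, and $u_c=\#$ when $\gamma_c=\#$.
\end{itemize}
All $|u_c|\le2$, so $L\le2$, and the bottom-marker constraint holds. Determinism holds because the status of $q$ fixes whether only $\varepsilon$-moves or only input moves are defined. An input round of $A$ with empty unread suffix corresponds to a missing move (no symbol is available), so both runs are incomplete. A one-to-one induction on rounds then matches the configurations, including the unread suffix.

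\emph{From Hopcroft--Ullman to $\TMTn{k}$.} Given $H$, each transition is compiled into a block of rounds driven by auxiliary control states that carry the pending data in finite memory.

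The first issue is that an input round of $\TMTn{k}$ must consume input while $H$ may instead choose an $\varepsilon$-move based on the tops. We therefore begin each simulated transition with an internal \emph{peek} round in state $q$. It reads the top tuple $\gamma$ through $\delta_A$ with symbol $\bot$ and stays on all channels. If $\delta_{\mathrm{HU}}(q,\varepsilon,\gamma)$ is defined, it moves to an execution state storing that transition. Otherwise it moves to an input-status state $[q,\gamma]$, which consumes $a$ and stores $\delta_{\mathrm{HU}}(q,a,\gamma)$. If that entry is undefined, the state $[q,\gamma]$ enters a non-completing sink. For the sink we use a rejecting halt state; it counts as incomplete only if input remains, so to be safe we use an internal state that loops forever, and an infinite run is incomplete. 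This accounts for at most $2$ rounds.

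Execution then processes channels one at a time. On channel $c$ we pop once to remove $\gamma_c$, then push the symbols of $u_c$ left to right, using at most $L+1$ rounds. If $\gamma_c=\#$, the pop is a no-op and we push the tail of $u_c$ after its leading $\#$. Other channels stay during these rounds. The output $o$ is emitted in one of these rounds, or in the peek round if nothing else is available, and the block ends by entering $q'$. The total is at most $2+k(L+1)$ rounds, and the block can be padded so the count is exact. Every auxiliary state is internal or input, and the target is halting exactly when $q'$ is halting.

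The main obstacle is the completion convention. $H$ is complete once it has consumed $\dashv$ and enters a halting state, whereas $A$ halts only upon entering a halting state. Because our compiled block enters $q'$ as its final round, consumption of $\dashv$ happens strictly before halting in both machines, and premature halts coincide. The remaining checks are that auxiliary states never emit twice and that divergence is preserved: an infinite $H$ run yields infinitely many blocks. Both follow directly from the construction, and the induction over transitions completes the proof.
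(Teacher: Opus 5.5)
Your proposal is correct and follows essentially the same route as the paper. One direction uses the coordinatewise replacement table $u_c\in\{\gamma_c,\ \gamma_c\lambda_c,\ \varepsilon,\ \#\}$, which gives $L\le2$. The other uses an internal dispatcher round that selects an $\varepsilon$-transition or defers to an input round, then one pop and at most $L$ pushes per channel (retaining $\#$), with a nonhalting internal sink for undefined entries, giving at most $2+k(L+1)$ rounds.

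Your explicit checks of determinism and of the completion convention at block boundaries fill in details the paper leaves implicit. One small imprecision: in the classical machine the transition that consumes $\dashv$ may itself enter a halting state, so ``strictly before'' should read ``no later than.'' This does not affect the argument.
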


\begin{proof}
Let $L:=\max(\{0\}\cup\{|u_c|:\text{$u_c$ occurs in the transition table}\})$.
To compile $\delta_{\mathrm{HU}}$ into Definition~\ref{def:k-stack-transducer},
an internal dispatcher uses $(q,\gamma_1,\dots,\gamma_k)$ to select an
$\varepsilon$-transition when one exists and otherwise enters an input state.
After selecting the transition, finite control stores
$(q',u_1,\dots,u_k,o)$ and, in a fixed stack order, replaces each old top by one pop
followed by the symbols of $u_c$.  When $\gamma_c=\#$, write
$u_c=\#v_c$, retain $\#$, and push only the symbols of $v_c$.
At most $L$ pushes are needed per stack; emitting $o$ only on the final simulated step gives
at most $2+k(L+1)$ steps per classical transition.  Undefined transitions enter a fixed
nonhalting internal sink.

Conversely, one step of Definition~\ref{def:k-stack-transducer} is one
$\delta_{\mathrm{HU}}$ transition.  For current top $\gamma_c$, set
\[
u_c:=
\begin{cases}
\gamma_c, & a_c=\mathrm{stay},\\
\varepsilon, & a_c=\mathrm{pop}\ \text{and}\ \gamma_c\ne\#,\\
\#, & a_c=\mathrm{pop}\ \text{and}\ \gamma_c=\#,\\
\gamma_c\lambda_c, & a_c=\mathrm{push}.
\end{cases}
\]
Label it by the supplied input symbol when $\chi_A(q)=\mathrm{input}$ and by
$\varepsilon$ when $\chi_A(q)=\mathrm{internal}$, and retain the same next state and
output.  Halting states have no outgoing transition.  Both compilations preserve the
input suffix, stacks, state, and accumulated output at transition boundaries, which
proves the claim.
\end{proof}

The forward direction serializes an unbounded-width top replacement into rounds of
bounded width, which is what the one-action-per-model-call interface requires.  Through
this lemma the classical one- and two-stack acceptance results speak about the normal
form of Definition~\ref{def:k-stack-transducer}
(Corollary~\ref{cor:classical-acceptor-hierarchy} and
Theorem~\ref{thm:universal-augmentation-acceptance}).  The transduction theorems of
Section~\ref{sec:main-results} are proved in that normal form and use the lemma nowhere.

\subsection{Finite blocks and channels}

Fix a finite nonempty token alphabet $\Gamma$.
Fix a block bound $B \ge 1$.
Let
\[
\Delta := \Gamma^{[1,B]} := \{ w \in \Gamma^* : 1\le |w| \le B \}.
\]
Because $\Gamma$ is finite and $B$ is fixed, the block alphabet $\Delta$ is finite.
For $r\ge0$, write
\[
\Delta^{\le r}
:=
\{ s \in \Delta^* : |s| \le r \}
=
\{\varepsilon\}\cup\Delta^{[1,r]},
\]
where $\Delta^{[1,r]}:=\{s\in\Delta^*:1\le|s|\le r\}$ and $\Delta^{[1,0]}:=\emptyset$.

For each fixed integer $k \ge 1$, let
\[
\Ck := \{1,\dots,k\}
\qquad\text{and}\qquad
\Delta_k := \Ck \times \Delta.
\]
A \emph{$k$-channel transcript} is a word $H \in \Delta_k^*$.
Thus every bounded transcript block is explicitly tagged by its channel.
For $c \in \Ck$, write
\[
\pi_c(H) \in \Delta^*
\]
for the channel-$c$ projection obtained by retaining the payloads of precisely the blocks
tagged $c$, in their original order.
When $k=1$, we identify $(1,x)$ with $x$, so a one-channel transcript is simply a word in
$\Delta^*$.

\subsection{Restricted Transcript-Managed Transducer}

\begin{definition}[Finite consulted state]
\label{def:finite-consulted-state}
A system has \emph{finite consulted state} if its joint exact configuration consulted in
one transition ranges over a finite set.
Equivalently, its one-step transition factors through a finite consulted-state space.
\end{definition}

An unbounded sequence still has finite consulted state when every transition consults a
finite summary.  Exact random access, exact rescan, and unbounded persistent controller
state can create infinitely many distinguishable retained configurations.

\begin{definition}[Restricted Transcript-Managed Transducer $\RTMTn{k}$]
\label{def:tmt-family}
Define the append-only stack-control alphabet by
\[
\Aapp
:=
\{\mathrm{stay},\mathrm{push}\}.
\]
For fixed $k\ge1$, define the \emph{Restricted Transcript-Managed Transducer} class
$\RTMTn{k}$ as the class of instances
\[
A:=(Q_A,\Sigma,\Omega,\Lambda,\#,\delta_A,\chi_A,q_0)
\in\TMTn{k}
\]
whose action tuple returned by $\delta_A$ always lies in
$\Aapp^k$.
Thus pop is the only stack action excluded, and every
member of $\RTMTn{k}$ is one agent with one finite controller.
Stack $c$ is called transcript channel $c$, and $\Lambda$ is its finite abstract
transcript alphabet.
By convention,
\[
\RTMT:=\RTMTn{1}.
\]
Thus $\RTMTn{k}$ is the append-only management layer of a standard deployment: blocks
accumulate, and each call consults the visible window of every channel.
\end{definition}

\begin{definition}[PopContext]
\label{def:popcontext}
The primitive
\[
P_c:=\PopContext(c)
\]
is the action $\mathrm{pop}$ of Definition~\ref{def:k-stack-transducer} on channel $c$,
the one action omitted by $\RTMTn{k}$.  On a tagged transcript $H=u(c,x)v$ whose suffix
$v$ contains no block tagged $c$,
\[
P_c(H)=uv,
\]
and $P_c$ is a no-op when channel $c$ is empty.  For one channel, write
$P:=P_1=\PopContext$, so $\RTMTn{1}+P$ abbreviates $\RTMTn{1}+\{P_1\}$.
Admitting $\{P_c\}_{c=1}^k$ restores the action range $\Astk^k$, hence
\[
\RTMTn{k}+\{P_c\}_{c=1}^k
=
\TMTn{k}.
\]
\end{definition}

\begin{remark}[What $k$ and $P_c$ supply]
\label{rem:what-k-counts}
The $k$ channels partition the blocks of one transcript, so $k$ is a property of the
caller's bookkeeping.  The power comes from re-exposing a block that append-only
visibility had already passed: once a buried block can return, the finite summary of
Theorem~\ref{thm:exact-characterization} no longer determines the future, and the channel
contents below the visible window become consultable memory.
\end{remark}

The abstract $\TMTn{1}$ normal form reads one current top symbol.  
The published GPT-2 and GPT-3 context sizes of $1024$ and $2048$
tokens state large fixed physical radii \cite{RadfordEtAl2019GPT2,BrownEtAl2020GPT3}. 
The recoding below
packs any fixed physical window into that symbol.  For a block alphabet $\Delta$ and
visible radius $r$, the resulting window alphabet contains
$\sum_{j=1}^{r}|\Delta|^j$ symbols.  Increasing a fixed $r$ enlarges the finite local
state space.  Every fixed value of $r$ has the same transduction power.  A growing
radius $r=r(n)$ gives a growing-consulted-state model. 

\begin{definition}[Abstract bounded-transcript controller]
\label{def:bounded-transcript-controller}
Fix $k,r\ge1$, finite input and output alphabets $\Sigma$ and $\Omega$, a finite
nonempty block alphabet $\Delta$, a finite nonempty control set $Q$, an initial state
$q_0\in Q$, and a status map
$\chi:Q\to\{\mathrm{input},\mathrm{internal},\mathrm{halt}\}$.
For a physical transcript $H\in\Delta_k^*$, define
\[
\mathrm{vis}^{(k)}_r(H)
:=
\bigl(\mathrm{vis}_r(\pi_1(H)),\dots,\mathrm{vis}_r(\pi_k(H))\bigr),
\]
where $\mathrm{vis}_r(\pi_c(H))$ is the longest suffix of channel $c$ containing at most
$r$ blocks.  For $s\in\Delta^{\le r}$ and $x\in\Delta\cup\{\varepsilon\}$, let
$\mathrm{trim}_r(sx)$ be the longest suffix of $sx$ containing at most $r$ blocks.
Set $Q^\circ:=\chi^{-1}(\{\mathrm{input},\mathrm{internal}\})$.
A \emph{bounded-transcript controller}, used below as the wrapper transducer, is the
tuple
\[
\mathcal W
:=
(Q,\Sigma,\Omega,\Delta,k,r,\chi,q_0,g),
\]
where
\[
\begin{split}
g:\;&Q^\circ\times\Sigma_\bot\times(\Delta^{\le r})^k\\
&\longrightarrow
Q\times(\Delta_k\cup\{\varepsilon\})
\times(\Omega\cup\{\varepsilon\}),
\end{split}
\]
is its step map.  Its configuration contains a physical transcript
$H\in\Delta_k^*$, and $g$ consults $\mathrm{vis}^{(k)}_r(H)$.
The last argument is an ordered tuple: its coordinate is the channel identity.
Thus $g$ may apply any fixed-width map, including attention after adding a
channel-ID vector to each coordinate.
Initially $H=\varepsilon$.  A returned $(c,b)\in\Delta_k$ appends that tagged block;
the returned $\varepsilon$ performs no append.  Input consumption, output, stopping, and
completion follow Definition~\ref{def:k-stack-transducer}.
The status and completed-run convention belong to this abstract wrapper, not to the
deployed decoder.
In the pop-enabled form, replace the transcript-action codomain by
\[
\Delta_k\cup\{P_1,\dots,P_k,\varepsilon\},
\]
so one abstract simulation step performs at most one append or pop.
\end{definition}

\begin{proposition}[Bounded-transcript correspondence]
\label{prop:bounded-transcript-correspondence}
Every bounded-transcript controller induces a member of $\RTMTn{k}$.
Conversely, every member of $\RTMTn{k}$ has a bounded-transcript realization for some
finite physical block alphabet and $r=1$, using at most $k$ abstract simulation steps per
source transition, at most $k-1$ of them silent.
In the pop-enabled form, the correspondence identifies each physical $P_c$ with pop on
abstract stack $c$.
\end{proposition}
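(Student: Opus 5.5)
Both directions are explicit simulations; the forward one uses the window itself as the stack symbol.

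\emph{Forward direction.} Given $\mathcal W=(Q,\Sigma,\Omega,\Delta,k,r,\chi,q_0,g)$, build $A\in\RTMTn{k}$ with $Q_A:=Q$, $\chi_A:=\chi$, and stack alphabet $\Lambda:=\Delta^{[1,r]}$, so each abstract symbol is a nonempty visible window. The invariant is: after every round, the top of abstract stack $c$ is $\mathrm{vis}_r(\pi_c(H))$ when that window is nonempty, and $\#$ otherwise. Define $\delta_A(q,a,\gamma_1,\dots,\gamma_k)$ as follows.
\begin{enumerate}[leftmargin=1.5em]
\item Decode each $\gamma_c$ to $s_c\in\Delta^{\le r}$, with $\#\mapsto\varepsilon$.
\item Compute $g(q,a,s_1,\dots,s_k)=(q',x,o)$.
\item If $x=(c,b)$, push $\mathrm{trim}_r(s_cb)$ on stack $c$ and stay on the others; if $x=\varepsilon$, stay everywhere.
\item Emit $o$ and move to $q'$.
\end{enumerate}
Appending $b$ to channel $c$ changes only that channel's window, and the new window is exactly $\mathrm{trim}_r(s_cb)$. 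Induction on rounds then shows the two configurations correspond: same state, same unread input, same emissions, and window tops matching the invariant. Hence stopping and completion coincide, and so do the completed transductions. The action tuple uses only stay and push, so $A\in\RTMTn{k}$.

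\emph{Converse.} Given $A\in\RTMTn{k}$, take physical block alphabet $\Delta:=\Lambda$ (one-token blocks over $\Gamma:=\Lambda$) and $r=1$. Channel $c$ of $H$ then holds stack $c$ minus $\#$, and an empty window plays the role of $\#$. A source round may push on several stacks at once, but a physical step appends at most one tagged block. So I serialize each round:
\begin{enumerate}[leftmargin=1.5em]
\item Step one consults all $k$ windows and the input symbol, computes $\delta_A$, stores the pending tuple $(q',\mathbf a,\boldsymbol\lambda,o)$ in finite control, and performs the first pending push. Input, if the round needs it, is consumed here.
\item The remaining pushes follow in fixed channel order, one per step, in internal states.
\item The output $o$ is emitted only on the last of these steps, and the state becomes $q'$ only after it.
\end{enumerate}
The round uses $\max(1,\#\text{pushes})\le k$ steps, and all but the one emitting step are silent, giving at most $k-1$ silent steps. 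Intermediate states get status internal, and the source status $\chi_A(q')$ is restored at the end. Before the pending pushes are flushed, the windows are not consulted for any new decision, so the simultaneous-action semantics is preserved.

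\emph{Pop-enabled form.} In the forward direction, a physical $P_c$ maps to an abstract pop on stack $c$. The main obstacle is that a single stored window cannot recover the window below after a pop. I would fix this by making every pushed symbol record the full window $\mathrm{trim}_r(s_cb)$, i.e. by pushing on every append. The pushed windows then form a stack whose symbol below the top is exactly the window before the last append, so popping restores the correct visible suffix. In the converse direction, source pops on several channels are serialized the same way as pushes, with a physical $P_c$ for each, and stays are skipped. An empty channel makes $P_c$ a no-op, which matches popping $\#$.
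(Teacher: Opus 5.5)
Your proposal is correct and follows the paper's proof essentially step for step. In the forward direction you store windows over $\Delta^{[1,r]}$ and push $\mathrm{trim}_r(s_c b)$ on each append, and in the converse you take $\Delta:=\Lambda$ and $r=1$, then serialize the non-stay actions in channel order with only the final step emitting. The pop-enabled ``fix'' you describe is not a separate modification, because your forward construction already pushes a full window on every append. What the pop case needs is the stronger invariant that stack $c$ equals $\#\zeta_1\cdots\zeta_t$ with $\zeta_j=\mathrm{vis}_r(b_1\cdots b_j)$, not just that its top matches the window; this is exactly the paper's encoding.
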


\begin{proof}
Define the finite window alphabet
\[
\widehat\Delta_r
:=
\Delta^{[1,r]}
=
\{s\in\Delta^*:1\le |s|\le r\}.
\]
For a channel word $b_1\cdots b_t\in\Delta^*$, set $\zeta_0:=\varepsilon$ and define
the window symbols
\[
\zeta_j:=\mathrm{vis}_r(b_1\cdots b_j)\in\widehat\Delta_r.
\]
Store $\zeta_1\cdots\zeta_t$ on the corresponding $\RTMTn{k}$ stack over
$\widehat\Delta_r$.  Its top is exactly the
window consulted by the transcript system.  Appending $b_{t+1}$ pushes
\[
\zeta_{t+1}:=\mathrm{trim}_r(\zeta_t b_{t+1}),
\]
with the empty channel handled by $\#$.  Hence every transcript step is one pop-free
stack-transducer step: an append tagged $c$ pushes $\zeta_{t+1}$ on stack $c$, and all other
stack actions stay.  The control, status, and output are unchanged.  A physical $P_c$
removes $\zeta_t$ and exposes $\zeta_{t-1}$, so the encoding commutes with abstract pop.

Conversely, for an $\RTMTn{k}$ with stack alphabet $\Lambda$, take the physical block
alphabet to be $\Delta':=\Lambda$, set $r=1$, and use one transcript channel for each
stack.
The finite controller stores the returned action, write, and output tuples.
If all actions stay, one simulation step emits the output and enters the next abstract
control state.  Otherwise, the first simulation step receives the symbol selected by the
abstract status, and the controller serializes the non-stay channel actions in increasing
channel order.  Every remaining step is internal and silent; only the final step emits
the pending output and enters the next abstract control state.  There are at most $k$
simulation steps and at most $k-1$ auxiliary steps.  In the pop-enabled form, push and
pop are serialized identically.  Induction at simulation boundaries preserves the
control state, channel contents, unread suffix, status, emitted output, stopping, and
completion.
This serialization establishes transduction equivalence.  The sample model applies one
hard or relaxed stack action inside each caller-supplied token position.
\end{proof}

Thus every fixed physical radius has an equivalent $r=1$ realization and carries no
additional theoretical power.

\begin{remark}[Transformer instantiation]
Consider the standard causally masked decoder stack of Transformer
blocks~\cite{VaswaniEtAl2017}, with token vocabulary $V$, finite-precision number
system $\F$, embedding dimension $d$, and fixed weights in $\F$.
A transcript-manager wrapper selects the finite visible suffix supplied to the standard
forward map.
Write the decoder instance as
\[
\begin{split}
\mathcal D
:=\bigl(&V,\F,d,N,L,(\mathcal H_\ell)_{\ell=0}^{L-1},e,p,\\
& (W_Q^{\ell,a},W_K^{\ell,a},W_V^{\ell,a})_
{\substack{0\le\ell<L\\a\in\mathcal H_\ell}},
(\operatorname{Block}_\ell)_{\ell=0}^{L-1},
W_{\mathrm{out}},b_{\mathrm{out}}\bigr),
\end{split}
\]
where $N$ is the maximum context length, $L$ is the number of layers,
$\mathcal H_\ell$ is the finite head set of layer $\ell$, and
$\operatorname{Block}_\ell$ contains that layer's remaining fixed Transformer
operations.
For each $a\in\mathcal H_\ell$, let $d_{\ell,a}\ge1$ be the head width.  The
coordinates in this tuple have types
\[
\begin{gathered}
e:V\to\F^d,\qquad
p:\{1,\dots,N\}\to\F^d,\\
W_Q^{\ell,a},W_K^{\ell,a},W_V^{\ell,a}
\in\F^{d_{\ell,a}\times d},\\
\operatorname{Block}_\ell:
\F^d\times
\F^{\sum_{a\in\mathcal H_\ell}d_{\ell,a}}
\to\F^d,\\
W_{\mathrm{out}}\in\F^{|V|\times d},
\qquad b_{\mathrm{out}}\in\F^{|V|}.
\end{gathered}
\]
Let
\[
F_{\mathcal D}:V^{[1,N]}\longrightarrow
\bigl(\F^{|V|}\bigr)^{[1,N]}
\]
be its standard forward map at maximum context length $N$, with
$F_{\mathcal D}(V^n)\subseteq(\F^{|V|})^n$ for every $1\le n\le N$.
Instantiate the wrapper transducer of
Definition~\ref{def:bounded-transcript-controller} with $k:=1$, so it exposes one
transcript suffix of at most $r$ blocks.  Fix a transcript tokenizer
\[
\operatorname{Tok}:\Sigma_\bot\times\Delta^{\le r}
\longrightarrow V^{[1,N]}
\]
and a finite state/action/output decoder
\[
\begin{split}
\operatorname{Dec}:\;&Q^\circ\times\Sigma_\bot
\times\bigl(\F^{|V|}\bigr)^{[1,N]}\\
&\longrightarrow
Q\times(\Delta\cup\{\varepsilon\})
\times(\Omega\cup\{\varepsilon\}).
\end{split}
\]
Under the standing $k=1$ identification $(1,x)\leftrightarrow x$, the transcript-action
codomain $\Delta\cup\{\varepsilon\}$ matches the general wrapper's
$\Delta_1\cup\{\varepsilon\}$.
Thus $r$ is the maximum number of exposed physical transcript blocks.  The parameter
$N$ is the maximum number of tokens accepted by $\mathcal D$;
$\operatorname{Tok}$ serializes the exposed blocks and current input symbol within
that token bound.
For $s\in\Delta^{\le r}$, define
\[
g_{\mathcal D,\operatorname{Tok},\operatorname{Dec}}(q,x,s)
:=
\operatorname{Dec}\bigl(
q,x,F_{\mathcal D}(\operatorname{Tok}(x,s))
\bigr).
\]
The resulting wrapped Transformer transducer is
\[
\mathcal W[\mathcal D,\operatorname{Tok},\operatorname{Dec}]
:=
\bigl(
Q,\Sigma,\Omega,\Delta,1,r,\chi,q_0,
g_{\mathcal D,\operatorname{Tok},\operatorname{Dec}}
\bigr).
\]
The objects $\mathcal D$, $\operatorname{Tok}$, and $\operatorname{Dec}$ parameterize
the local step map $g$.  The decoder tuple and its Transformer block equations stay
fixed.
Since $V$, $\F$, and $N$ are finite, the domain and range of
$F_{\mathcal D}$ are finite.  Hence the exact local transition
$g_{\mathcal D,\operatorname{Tok},\operatorname{Dec}}$ factors through a finite set of
visible contexts.

Take $\Sigma=\Omega=\Gamma=V$ and choose a block bound $B$ for transcript chunks.
In the one-channel token-level realization, $\Delta=V$, $B=1$, and $r=N$ when the
current input and boundary information are already present in the exposed token sequence.
If the input or boundary information is serialized separately, its tokens also count
toward $N$; in every case the exact requirement is
$|\operatorname{Tok}(x,s)|\le N$.
For chunked transcripts, $\operatorname{Tok}$ may use any fixed segmentation satisfying
the same bound.
Proposition~\ref{prop:bounded-transcript-correspondence} packs the entire fixed portion
consulted by one forward step into one finite abstract top symbol.
This construction specifies the local transcript interface.  The standard
autoregressive protocol supplies prompt processing, successive invocations, and
end-of-sequence or length stopping.  The wrapper uses this fixed schedule.
\end{remark}

\subsection{Monotone orchestration}

Throughout this subsection, each $\RTMT$ agent uses the $r=1$ bounded-transcript
realization from Proposition~\ref{prop:bounded-transcript-correspondence}.
Because the population is finite, a disjoint-union recoding gives all agents a common
finite block alphabet $\Delta$.
For $H\in\Delta^*$, let $\mathrm{top}_\#(H)$ be its final block, or $\#$ when
$H=\varepsilon$.

A \emph{monotone orchestration system} consists of finitely many such agents
$\mathcal{M}_1,\dots,\mathcal{M}_m$, physical transcripts
$H_1,\dots,H_m\in\Delta^*$, local control states $q_1,\dots,q_m$, and an
orchestration-controller state
$q_{\mathrm{ctl}}\in Q_{\mathrm{ctl}}$, where $Q_{\mathrm{ctl}}$ is finite.
Each agent starts in its designated initial state with an empty transcript, and the
controller fixes an initial state $q_{\mathrm{ctl},0}\in Q_{\mathrm{ctl}}$.
It also fixes a global status map
\[
\chi_{\mathrm{ctl}}:
Q_{\mathrm{ctl}}\to\{\mathrm{input},\mathrm{internal},\mathrm{halt}\}.
\]
It governs the shared input convention: the agents have no separate unread-input
cursors, and a selected local transition receives the symbol chosen from the single
global unread suffix.

The population is fixed and finite, as required by the collapse results below;
unbounded \textbf{Spawn} is treated separately.

\begin{definition}[Monotone protocol]
A \emph{monotone protocol step} is any deterministic controller action that depends only on
\begin{enumerate}[leftmargin=1.5em]
\item the current controller state,
\item the current symbol in $\Sigma_\bot$,
\item the current local agent states,
\item the exposed blocks $\mathrm{top}_\#(H_1),\dots,\mathrm{top}_\#(H_m)$,
\end{enumerate}
and which, apart from finite controller-state updates, performs at most one of the
following transcript actions:
\begin{enumerate}[label=(M\arabic*),leftmargin=1.8em]
\item execute one step of one agent's bounded-transcript realization whose
input/internal status agrees with $\chi_{\mathrm{ctl}}(q_{\mathrm{ctl}})$,
\item append one block from $\Delta$ to some transcript,
\item copy an exposed block to the end of another transcript.
\end{enumerate}
Each step may also emit at most one symbol from a fixed finite output alphabet.
No monotone protocol step may delete a block or otherwise reveal transcript content
below the exposed blocks.
On input $w$, the global unread suffix starts as $w\dashv$.
An input controller state consumes its first symbol, an internal controller state supplies
$\bot$, and a halting controller state admits no step.
The run is complete exactly when it halts after consuming $\dashv$, as in
Definition~\ref{def:k-stack-transducer}.
For language recognition, partition the halting controller states as
\[
\chi_{\mathrm{ctl}}^{-1}(\mathrm{halt})
=
Q_{\mathrm{ctl}}^{\mathrm{acc}}
\mathbin{\dot\cup}
Q_{\mathrm{ctl}}^{\mathrm{rej}}.
\]
\end{definition}

The visibility restriction is exact: a copy source is one of the currently exposed
blocks.  Addressing a fixed finite set of transcript positions remains finite-state
because their values fit in the finite consulted-state summary.  Addressing arbitrary
positions in an unbounded retained transcript creates infinitely many consultable
configurations.  Exact reload of discarded text and an unbounded writable store create
the same additional resource.

For each fixed $m\ge1$, let $\MONn{m}$ denote the class of monotone
orchestration systems with exactly $m$ agents.

\begin{definition}[Spawn]
The operation \textbf{Spawn} creates a fresh agent with its own local state and
transcript, initialized in a fixed start configuration.
If the controller may invoke \textbf{Spawn} unboundedly often, then the population of
live agents is no longer a priori finite.
Operationally, it models starting a fresh agent session with its own retained transcript.
\end{definition}

\begin{remark}[Deployed transcript primitives]
Table~\ref{tab:deployed-monotone} states when common deployment patterns remain monotone.
\PopContext, unbounded writable memory, and unbounded
\textbf{Spawn} are outside the monotone protocol defined here.
\end{remark}

\begin{table}[!t]
\caption{Deployment patterns and monotone conditions.}
\label{tab:deployed-monotone}
\centering
\small
\begin{tabular}{@{}p{0.27\linewidth}p{0.27\linewidth}p{0.36\linewidth}@{}}
\hline
Pattern & Ops / primitives & Monotone condition \\
\hline
CoT / ReAct scratchpad~\cite{MerrillSabharwal2024CoT,FengEtAl2023CoT,YaoEtAl2023ReAct}
& Append
& Each step reads a fixed-size exact context and cannot recover discarded tokens. \\
Standard RAG~\cite{LewisEtAl2020RAG}
& Copy / insert into window
& Document store is read-only. \\
Finite-summary compaction~\cite{RaeEtAl2019Compressive,DaiEtAl2025}
& Replace by finite summary
& Discarded exact text is never reloaded. \\
Multi-agent handoff~\cite{RizviMartelEtAl2025MultiAgent,WuEtAl2024AutoGen}
& Copy shared / visible messages
& Agent population stays fixed. \\
KV cache / durable store~\cite{OrenEtAl2024}
& Retain / drop cache entries
& Consulted state remains a fixed finite context; dropped exact entries are never
re-exposed. \\
\hline
\end{tabular}
\end{table}

\subsection{Orchestrated Transcript Machine}

\begin{definition}[Orchestrated Transcript Machine with $k$ agents]
\label{def:otm-family}
Fix $k\ge1$.  Let $A_i\in\RTMT$ range over one-channel agents, let
$\Sigma,\Omega$ range over finite input and output alphabets, and let
$Q_{\mathrm{ctl}}$ range over finite orchestration-control sets.
Use the $r=1$ realizations of
Proposition~\ref{prop:bounded-transcript-correspondence}; after finite recoding, let
$\Delta$ be their common physical block alphabet and $Q_i$ the local control of
agent $i$.  Set
\[
Q_{\mathcal O}
:=
Q_{\mathrm{ctl}}\times\prod_{i=1}^k Q_i,
\qquad
\Delta_\#:=\Delta\cup\{\#\}.
\]
Let
\[
\begin{aligned}
\mathcal A_{\mathcal O,k}
&:=
\{\mathrm{idle}\}\cup
\{\mathrm{step}(i):i\in\Ck\}\\
&\quad\cup
\{\mathrm{push}(i,b):
  i\in\Ck,\ b\in\Delta\}\\
&\quad\cup
\{\mathrm{route}(i,j):
  i,j\in\Ck,\ i\ne j\}\\
&\quad\cup
\{\mathrm{pop}(i):i\in\Ck\}.
\end{aligned}
\]

Define $\OTMn{k}$ as the class of instances
\[
\mathcal O
:=
\bigl(
(A_i)_{i=1}^k,Q_{\mathrm{ctl}},\Sigma,\Omega,\Delta,
\chi_{\mathcal O},q_{\mathcal O,0},\delta_{\mathcal O}
\bigr),
\]
where $q_{\mathcal O,0}\in Q_{\mathcal O}$ contains the designated controller and
agent initial states,
\[
\begin{aligned}
\chi_{\mathcal O}:
Q_{\mathcal O}
&\to\{\mathrm{input},\mathrm{internal},\mathrm{halt}\},\\
Q_{\mathcal O}^{\circ}
&:=
\chi_{\mathcal O}^{-1}(\{\mathrm{input},\mathrm{internal}\}),
\end{aligned}
\]
and
\[
\begin{split}
\delta_{\mathcal O}:\;&
Q_{\mathcal O}^{\circ}\times\Sigma_\bot\times\Delta_\#^k\\
&\longrightarrow
Q_{\mathcal O}\times\mathcal A_{\mathcal O,k}
\times(\Omega\cup\{\varepsilon\})
\end{split}
\]
is a deterministic sequential orchestration map.  It consults the finite global
control and the exposed top of each physical transcript $H_i\in\Delta^*$.
An \emph{ordinary call} to agent $A_i$ advances only that agent's local control
state under the shared input symbol selected by $\chi_{\mathcal O}$; it does not
consult or modify any other agent's state, and the orchestration retains exclusive
control of every transcript.  Action $\mathrm{step}(i)$ is one ordinary call with no
transcript change.  Action $\mathrm{push}(i,b)$ is one ordinary call together with an
orchestration-level append of the action-specified block $b\in\Delta$ to $H_i$; the
appended block is taken from the action, not from an agent-emitted return value.
Action $\mathrm{route}(i,j)$ appends the exposed final block of $H_i$ to $H_j$ and
leaves $H_i$ and all local states unchanged; it is a no-op when $H_i$ is empty.
Action $\mathrm{pop}(i)$ is issued by the orchestration layer and deletes the final
block of $H_i$ when present; idle changes no agent or transcript.
The initial configuration is
$(q_{\mathcal O,0},w\dashv,\varepsilon,\dots,\varepsilon)$.
Input consumption, output, halting, and completion follow
Definition~\ref{def:k-stack-transducer}.

Thus $\OTMn{k}$ has $k$ one-channel agents.  The class $\TMTn{k}$ has one agent with
$k$ tagged channels.
\end{definition}

\begin{remark}[Compaction, routing, and serialization]
Compaction is not an extra primitive: an agent produces a summary by an ordinary step,
and orchestration-level pop removes superseded blocks.  Cross-agent handoff is
$\mathrm{route}(i,j)$; a destructive move is route followed by $\mathrm{pop}(i)$.
The model records a sequential event trace.  Every deterministic parallel deployment
has a deterministic serialization of its completed agent events.  If several events
read one pre-round state, its finite control and exposed-top tuple can be retained in
$Q_{\mathrm{ctl}}$ while their effects are serialized.
\end{remark}

\section{Theoretical Results}
\label{sec:main-results}

All results in this section concern the abstract machine classes of
Section~\ref{sec:model}.

\subsection{Exact characterization of one agent}

\begin{theorem}[Exact single-agent characterization]
\label{thm:exact-characterization}
For every fixed $k\ge 1$,
\[
\Trans(\RTMTn{k})=\Trans(\FST).
\]
\end{theorem}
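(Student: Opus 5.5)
The plan is to prove both inclusions directly in the normal form of Definition~\ref{def:k-stack-transducer}, without using Lemma~\ref{lem:classical-stack-normal-form}.

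\emph{Inclusion $\Trans(\FST)\subseteq\Trans(\RTMTn{k})$.} Given an $\FST$, we build an $\RTMTn{k}$ with the same control set, status map, initial state and outputs. Every action tuple is $(\mathrm{stay},\dots,\mathrm{stay})$, the stack alphabet is any singleton, and the transition ignores the stack arguments. Every stack remains $\#$ forever, so runs correspond step for step. In particular stopping, completion and emitted output agree, and the completed transductions coincide.

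\emph{Inclusion $\Trans(\RTMTn{k})\subseteq\Trans(\FST)$.} Fix $A\in\RTMTn{k}$. The key observation is that without pop the top of each stack is always the most recently pushed symbol, or $\#$ if nothing has been pushed. Symbols below the top can never be re-exposed, because only stay and push are available. We therefore define an $\FST$ $F$ with state set
\[
Q_F:=Q_A\times(\Lambda\cup\{\#\})^k,
\]
initial state $(q_0,\#,\dots,\#)$, and status $\chi_F(q,\gamma_1,\dots,\gamma_k):=\chi_A(q)$.

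For a nonhalting $(q,\boldsymbol\gamma)$ and supplied symbol $x\in\Sigma_\bot$, let $\delta_A(q,x,\boldsymbol\gamma)=(q',\mathbf a,\boldsymbol\lambda,o)$. We then set $\delta_F((q,\boldsymbol\gamma),x):=((q',\boldsymbol\gamma'),o)$, where $\gamma'_c=\lambda_c$ if $a_c=\mathrm{push}$ and $\gamma'_c=\gamma_c$ if $a_c=\mathrm{stay}$. This is finite because $Q_A$, $\Lambda$ and $k$ are fixed.

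\emph{Simulation invariant.} We show by induction on the number of rounds that after each round the following holds. If $A$ is in configuration $(q,u,\#\beta_1,\dots,\#\beta_k)$, then $F$ is in state $(q,\mathrm{top}(\#\beta_1),\dots,\mathrm{top}(\#\beta_k))$ with the same unread suffix $u$ and the same accumulated output.

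The inductive step uses that $\delta_A$ depends only on the state, the supplied symbol and the tops. It also uses that the status maps agree, so both machines supply the same symbol (consuming input or supplying $\bot$) and halt at the same rounds. Consequently an input request on an empty suffix, a premature halt, and an infinite run occur in $A$ exactly when they occur in $F$, and completed outputs are equal.

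\emph{Main obstacle.} The only substantive point is justifying that the top after a push-free round is the old top and after a push is $\lambda_c$. This must hold independently of the deeper stack contents, which is precisely where the absence of pop is used. It is the finite consulted-state property of Definition~\ref{def:finite-consulted-state}. I expect the remaining care to go into matching the completion and stopping conventions exactly, which the shared status map handles.
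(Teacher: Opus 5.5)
Your proposal is correct and takes essentially the same route as the paper. Both use the finite summary $Q_A\times(\Lambda\cup\{\#\})^k$ of control state and channel tops, in which stay preserves a top and push replaces it because no pop can re-expose a buried symbol, and both prove the converse with a singleton stack alphabet and all-$\mathrm{stay}$ actions; your explicit induction invariant and your check of the stopping and completion conventions spell out what the paper's shorter proof leaves implicit.
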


\begin{proof}
Let $A\in\RTMTn{k}$ have control set $Q_A$ and stack alphabet $\Lambda$.
The finite summary space
\[
S:=Q_A\times(\Lambda\cup\{\#\})^k
\]
records the control state and current top of every channel.  Stay preserves a top and
push replaces the recorded top with the written symbol.  Because pop is unavailable, no buried symbol
can become visible again.  Hence the next summary, status, and output depend only
on $S$, giving an equivalent deterministic finite-state transducer.

Conversely, a deterministic finite-state transducer under the same status convention is
an $\RTMTn{k}$ over any singleton stack alphabet whose action tuple is always
$\mathrm{stay}^k$.
Thus both classes realize the same completed transductions.
\end{proof}

In particular, every $\RTMTn{k}$ recognizes exactly the regular languages and is strictly
weaker than a pushdown automaton, which recognizes the nonregular language
$\{a^n b^n:n\ge1\}$.

\subsection{Monotone protocols stay finite-state}

\begin{theorem}[Monotone multi-agent collapse]
\label{thm:monotone-collapse}
For every fixed $m\ge1$,
\[
\Trans(\MONn{m})
=
\Trans(\FST).
\]
\end{theorem}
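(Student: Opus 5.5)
The plan mirrors the proof of Theorem~\ref{thm:exact-characterization}: compress a monotone orchestration system into a finite summary that determines every future step, and read off an equivalent $\FST$.

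Fix $\mathcal S\in\MONn{m}$ with agents $\mathcal M_1,\dots,\mathcal M_m$ in their $r=1$ realizations over the common block alphabet $\Delta$. Agent $i$ has local control set $Q_i$, and the orchestration controller has control set $Q_{\mathrm{ctl}}$. Take the summary space
\[
S:=Q_{\mathrm{ctl}}\times\prod_{i=1}^m Q_i\times(\Delta\cup\{\#\})^m,
\]
whose element for a configuration records the controller state, the local states, and $\mathrm{top}_\#(H_1),\dots,\mathrm{top}_\#(H_m)$. The space $S$ is finite because $m$ is fixed and all components are finite.

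The key step is to show that one monotone protocol step maps summaries to summaries. By definition, the choice of step depends only on the controller state, the current symbol in $\Sigma_\bot$, the local states, and the exposed tops, which are exactly the summary together with the input symbol. It remains to check that each action's effect on the summary is computable from the summary and the symbol.
\begin{enumerate}[label=(\roman*),leftmargin=1.8em]
\item For (M1), an agent step under the $r=1$ realization reads only the agent's local state, the supplied symbol, and its own top. It returns a new local state, optionally one appended block $b$, and an output. The new top is then $b$, or unchanged if nothing is appended.
\item For (M2), an append of $b$ to $H_j$ makes $b$ the new top of $H_j$.
\item For (M3), copying exposed block $\mathrm{top}_\#(H_i)$ to $H_j$ makes that block the new top of $H_j$, or is a no-op if $H_i$ is empty. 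The source block is in the summary.
\end{enumerate}
Since no step deletes a block or reveals content below the exposed tops, the tops in the summary are the only transcript content that can ever be consulted again. Buried blocks never re-enter any transition's domain. Emitted output and the consume/internal/halt status likewise depend only on the summary, through $\chi_{\mathrm{ctl}}$.

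The next step builds a deterministic finite-state transducer with state set $S$. Its status map is induced by $\chi_{\mathrm{ctl}}$ on the controller component, and its transition is the induced summary update. Induction on steps shows that the summary of $\mathcal S$'s configuration after $t$ steps equals the $\FST$ state after $t$ rounds, with the same unread suffix and the same emitted outputs. Hence halting, completion after $\dashv$, and output agree, and the same holds for the acceptance partition.

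For the converse, realize any $\FST$ inside $\MONn{m}$. Let the controller be the $\FST$ itself, and use only (M1) steps of agents that never append, or simply keep all transcripts idle and perform controller-only steps. The subtle point is whether a step with no transcript action is permitted. The definition says a step performs \emph{at most one} transcript action, so controller-only steps are allowed. If one insisted on an agent step, each $\FST$ round would execute one (M1) step of agent $1$, taken as a trivial one-state agent with matching status. This requires the agent's status to agree with $\chi_{\mathrm{ctl}}$; that is arranged by giving agent $1$ one input state and one internal state and tracking which applies in the controller.

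The main obstacle I expect is item (i). It needs care that the $r=1$ bounded-transcript realization of Proposition~\ref{prop:bounded-transcript-correspondence} really consults only the top block and the agent's finite control. In particular, the serialization auxiliary steps of that realization must live in the agent's finite control and not in hidden transcript content, so that one (M1) step has a summary-determined effect.
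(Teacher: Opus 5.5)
Your proposal is correct and follows essentially the same route as the paper: the paper also uses the finite summary $Q_{\mathrm{ctl}}\times\prod_{i=1}^m S_i$, where $S_i$ is the agent's control together with its encoded visible suffix (for $r=1$, exactly your exposed block). It observes that local calls, appends, and exposed-block copies update this summary deterministically, since no permitted operation reveals a buried block, and it gets the converse from an orchestration controller that implements the finite-state transducer while the agents stay. The obstacle you flag in item (i) does not arise, because the $r=1$ realization of a one-channel agent uses at most $k-1=0$ auxiliary serialization steps per source transition.
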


\begin{proof}
For each agent, Proposition~\ref{prop:bounded-transcript-correspondence} and
Theorem~\ref{thm:exact-characterization} give a finite local summary $S_i$ containing
its control and encoded visible suffix.  The global summary space
\[
S
:=
Q_{\mathrm{ctl}}
\times
\prod_{i=1}^m S_i
\]
is finite.  A local transition updates one $S_i$.  A direct append or exposed-block copy
updates the target's encoded visible suffix from the source and target suffixes; no permitted operation
reveals a buried symbol.  Thus every protocol step induces a deterministic map on $S$,
which is sufficient to determine an equivalent finite-state transition.
Conversely, every deterministic finite-state transducer is realized by a member of
$\MONn{m}$ whose agents always stay and whose orchestration controller implements
the finite-state transition.
\end{proof}

The fixed-population hypothesis is essential: after $n$ unbounded \textbf{Spawn}
operations the summary has $n$ local components, so no fixed finite product
$S$ covers all configurations.

\begin{remark}[Nondeterministic and probabilistic protocols]
Finite branching does not enlarge the summary space $S$.  Nondeterministic existential
acceptance reduces by the subset construction, while assigning positive probability to
each enabled branch gives the randomized-machine construction of
\cite[Sec.~11.4.3]{HopcroftUllman}: its random tape resolves a path through the same
finite summary.  Thus ordinary finite-precision token sampling is a probabilistic
finite-state machine on $S$ and accepts with positive probability exactly when some
corresponding nondeterministic branch accepts.
\end{remark}

\begin{corollary}[Append-only CoT cycles]
\label{cor:cot-cycle}
Under the hypotheses of Theorem~\ref{thm:monotone-collapse}, allowing unboundedly many
append-only internal steps after a finite input is consumed yields an ultimately periodic
run whenever it does not halt.
\end{corollary}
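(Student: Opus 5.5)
The plan is to reuse the finite global summary from the proof of Theorem~\ref{thm:monotone-collapse} and apply a pigeonhole argument to the deterministic dynamics it induces. Fix a monotone orchestration system in $\MONn{m}$ and an input $w$. After $w\dashv$ has been consumed, every remaining round is internal, so the supplied symbol is always $\bot$. The global summary space
\[
S:=Q_{\mathrm{ctl}}\times\prod_{i=1}^m S_i
\]
is finite, and by the proof of Theorem~\ref{thm:monotone-collapse} each protocol step induces a deterministic map on $S$. The map also depends on the supplied symbol, so in the post-input phase I fix the symbol to $\bot$. This gives a single self-map
\[
T_\bot: S^\circ\to S,
\]
where $S^\circ$ is the set of non-halting summaries. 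Together with $T_\bot$, we get an output function $o_\bot:S^\circ\to\Omega\cup\{\varepsilon\}$, because the emitted symbol is also determined by the summary and the current symbol.

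Next, let $s_0\in S$ be the summary at the first round after $\dashv$ is consumed, and consider the sequence $s_{t+1}:=T_\bot(s_t)$. If the run does not halt, every $s_t$ lies in $S^\circ$, so the sequence is infinite in the finite set $S$. By pigeonhole there exist $t_0<t_1\le t_0+|S|$ with $s_{t_0}=s_{t_1}$. Since $T_\bot$ is deterministic, induction gives $s_{t+p}=s_t$ for all $t\ge t_0$, where $p:=t_1-t_0$. Hence the summary sequence, and with it the sequence of controller states, local agent states, exposed tops, and emitted outputs $o_\bot(s_t)$, is ultimately periodic with preperiod at most $|S|$ and period at most $|S|$.

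The step I expect to be the main obstacle is fixing what ``ultimately periodic run'' means, because the full configuration is not periodic: transcripts keep growing under appends. I would therefore state the conclusion at the level of the consulted state. First, the action trace is ultimately periodic, since each action (which agent steps, which block is appended or copied where) is a function of $s_t$. Second, so are the output trace and the exposed-window sequence. The transcripts themselves grow by a fixed periodic pattern of appended blocks per period, so the retained suffix beyond time $t_0$ is an ultimately periodic word. The key justification, which I would spell out, is the claim from Theorem~\ref{thm:monotone-collapse} that no monotone step reveals buried content. This is what makes the future of the run a function of $s_t$ alone, so equal summaries yield identical futures.
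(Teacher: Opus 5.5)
Your proposal is correct and follows the paper's own argument. Once only $\bot$ is supplied, the finite global summary $S$ from Theorem~\ref{thm:monotone-collapse} evolves deterministically and autonomously, so a nonhalting run revisits a summary and from then on repeats its states, actions, and emitted blocks periodically. Your extra care makes explicit what the paper's two-line proof leaves implicit: periodicity holds for the summaries and the appended blocks, not for the growing transcripts, and taking the repeat among $s_0,\dots,s_{|S|}$ is what actually bounds the preperiod by $|S|$.
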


\begin{proof}
By Theorem~\ref{thm:monotone-collapse}, the global process factors through a finite state
set $S$.
After the input is consumed, its internal transition on $S$ is deterministic and
autonomous.
Hence an infinite run eventually revisits a state, after which both the states and emitted
blocks repeat periodically.
\end{proof}

\subsection{Multi-agent correspondence and stack hierarchy}

\begin{theorem}[Agent/channel equivalence]
\label{thm:agent-stack-equivalence}
For every fixed $k\ge1$,
\[
\Trans(\OTMn{k})=\Trans(\TMTn{k}).
\]
\end{theorem}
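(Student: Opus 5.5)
We prove the two inclusions separately, each by a direct simulation at round boundaries, in the style of Proposition~\ref{prop:bounded-transcript-correspondence}. In both directions the invariant is the same: at corresponding boundaries the two machines have the same unread input suffix and the same accumulated output. Their transcripts also correspond under a fixed letter-to-letter recoding (physical transcript $H_i$ versus abstract stack word $c=i$, with $\varepsilon$ matching the bare $\#$). Finally, the simulating control state determines the simulated control state by a fixed projection.

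For $\Trans(\OTMn{k})\subseteq\Trans(\TMTn{k})$, we build a $\TMTn{k}$ instance with control set $Q_{\mathcal O}$ and stack alphabet $\Lambda:=\Delta$. Stack $i$ stores $H_i$, so its top is exactly $\mathrm{top}_\#(H_i)$, and we set $\chi_A:=\chi_{\mathcal O}$. A single round suffices for every action of $\mathcal A_{\mathcal O,k}$.
\begin{itemize}
\item Idle and $\mathrm{step}(i)$ use action tuple $\mathrm{stay}^k$. For $\mathrm{step}(i)$, the $Q_i$-coordinate is updated by the agent's local transition on the shared symbol; this is finite data already available to $\delta_A$, since $\delta_{\mathcal O}$ itself returns the new $Q_{\mathcal O}$ state.
\item $\mathrm{push}(i,b)$ pushes $b$ on stack $i$.
\item $\mathrm{route}(i,j)$ pushes the current top of stack $i$ onto stack $j$ when that top is not $\#$, and stays otherwise, matching the no-op on empty $H_i$.
\item $\mathrm{pop}(i)$ pops stack $i$; a pop on $\#$ is a no-op, matching deletion ``when present''.
\end{itemize}
Outputs, status, halting and completion carry over verbatim, so one induction on rounds finishes this direction.

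For $\Trans(\TMTn{k})\subseteq\Trans(\OTMn{k})$, take $A\in\TMTn{k}$ and use $k$ trivial one-channel agents. Each has a singleton local control, a status compatible with whatever the orchestrator requests, and always stays; all real control lives in $Q_{\mathrm{ctl}}$. Set $\Delta:=\Lambda$. Simulating one round of $A$ takes the following steps.
\begin{enumerate}
\item The orchestrator reads $(q,x,\gamma_1,\dots,\gamma_k)$ with one $\delta_{\mathcal O}$ step whose status is $\chi_A(q)$. It computes $\delta_A$ and stores the resulting $(q',\mathbf a,\boldsymbol\lambda,o)$ in $Q_{\mathrm{ctl}}$.
\item It serializes the non-stay channel actions in increasing channel order, using $\mathrm{push}(c,\lambda_c)$ for push and $\mathrm{pop}(c)$ for pop. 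Each such step after the first is internal and silent.
\item It emits $o$ only on the final step and enters $q'$.
\end{enumerate}
This costs at most $k$ orchestration steps per round, as claimed in the introduction.

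The main obstacle is the first of these serialized steps. That step must carry the input status of the simulated round, yet it also performs a transcript action. I would handle this in one of two ways. The first is to fold the first channel action into the consuming step itself, since a push, pop or idle action can be issued alongside a consumed symbol. The second, if no real action occurs, is to issue idle with the correct status. Because the $\delta_A$ decision depends only on pre-round tops, we need that the serialized actions on distinct stacks commute and that later decisions never reread modified tops within the round. Both hold because the whole tuple was cached in finite control at the first step. A second subtle point is that $\mathrm{push}$ formally includes an ordinary agent call. This is harmless because the trivial agents' single state is fixed under every symbol.

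With these checked, an induction at simulation boundaries preserves the configuration correspondence. Completion is preserved too: $\dashv$ is consumed at the same simulated round, and halting is reached only at boundaries. This establishes both inclusions and hence the equality.
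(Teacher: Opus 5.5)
Your proposal is correct and matches the paper's proof essentially step for step. The forward direction runs $\OTMn{k}$ on a $\TMTn{k}$ with control set $Q_{\mathcal O}$ and stack alphabet $\Delta$, one round per orchestration action, with route as a push of the source top or a stay when the source is empty. The reverse direction uses one-state agents and caches the pending transition in $Q_{\mathrm{ctl}}$; it serializes the non-stay coordinates in increasing channel order, with the first round consuming the source symbol, one idle round for an all-stay transition, and output emitted only on the final round. That is exactly how the paper resolves your ``first serialized step'' concern, and it gives the same bound of at most $k$ orchestration steps per simulated round.
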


\begin{proof}
Let $\mathcal O\in\OTMn{k}$.  Use $Q_{\mathcal O}$ as finite controller state, $\Delta$
as channel alphabet, and transcript $H_i$ as channel $i$ above its bottom marker.
Actions $\mathrm{step}(i)$ and idle become the all-stay tuple,
$\mathrm{push}(i,b)$ becomes push of $b$ on coordinate $i$, and
$\mathrm{route}(i,j)$ becomes stay when stack $i$ is empty and otherwise push of its
top symbol on coordinate $j$.
Action $\mathrm{pop}(i)$ becomes pop on coordinate $i$; every other coordinate stays.
The transition domain, consulted top tuple, status, input convention, and output are
otherwise identical, yielding a step-for-step deterministic simulation by a member of
$\TMTn{k}$.

Conversely, let a member of $\TMTn{k}$ have channel alphabet $\Lambda$.
Take $k$ one-state $\RTMT$ agents with common physical block alphabet
$\Delta:=\Lambda$ whose finite input interface selects stay or push of a specified
$b\in\Lambda$.  Store the simulated control state and pending transition in
$Q_{\mathrm{ctl}}$.  At the start of each simulated transition,
$\delta_{\mathcal O}$ consults the same top tuple and serializes the at most $k$
non-stay coordinate actions in increasing channel order:
$\mathrm{push}(i,b)$ implements push,
$\mathrm{pop}(i)$ implements pop, and an all-stay transition needs one idle round.
The first round consumes the source symbol;
remaining rounds are internal $\bot$-steps.  Only the final round emits the pending
output and completes the simulated transition.  Induction over completed simulated transitions
preserves the simulated state, stacks, unread input, status, and emitted output.
\end{proof}

\begin{corollary}[Pop-enabled single-/multi-agent correspondence]
\label{cor:tmt-otm-equivalence}
For every fixed $k\ge1$,
\[
\Trans(\RTMTn{k}+\{P_c\}_{c=1}^k)=\Trans(\OTMn{k}).
\]
\end{corollary}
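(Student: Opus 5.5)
The plan is to chain two earlier results and add no new simulation. First I will check that the left-hand class is literally $\TMTn{k}$. Definition~\ref{def:popcontext} identifies $P_c$ with the action $\mathrm{pop}$ of Definition~\ref{def:k-stack-transducer} on channel $c$. It also records that admitting $\{P_c\}_{c=1}^k$ enlarges the action range of $\RTMTn{k}$ from $\Aapp^k$ back to $\Astk^k$. Every other component of an instance is untouched: control set, status map, alphabets, bottom marker, input and completion conventions. Hence the instances of $\RTMTn{k}+\{P_c\}_{c=1}^k$ are exactly the instances of $\TMTn{k}$, and so
\[
\Trans(\RTMTn{k}+\{P_c\}_{c=1}^k)=\Trans(\TMTn{k}).
\]

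Second, I will invoke Theorem~\ref{thm:agent-stack-equivalence} for the same fixed $k$, which gives $\Trans(\OTMn{k})=\Trans(\TMTn{k})$. Composing the two equalities yields the statement.

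The only point needing care is the first step, specifically the semantics of a pop on an empty channel. Definition~\ref{def:popcontext} makes $P_c$ a no-op on an empty channel. Definition~\ref{def:k-stack-transducer} makes pop on $\#$ leave the stack unchanged. These agree, so the identification is action-for-action and holds in both directions, not merely up to simulation. If the reader prefers the physical tagged-transcript reading of $P_c$, I would add one sentence invoking the pop-enabled clause of Proposition~\ref{prop:bounded-transcript-correspondence}. That clause identifies each physical $P_c$ with abstract pop on stack $c$, so the physical and abstract readings give the same class of transductions.

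I do not expect a real obstacle. The corollary is a relabeling followed by one application of Theorem~\ref{thm:agent-stack-equivalence}.
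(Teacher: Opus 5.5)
Your proposal is correct and follows the paper's proof: identify $\RTMTn{k}+\{P_c\}_{c=1}^k$ with $\TMTn{k}$ via Definition~\ref{def:popcontext}, then apply Theorem~\ref{thm:agent-stack-equivalence}. Your check that empty-channel pop means the same thing in both definitions is a harmless sanity step that the paper leaves implicit.
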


\begin{proof}
By Definition~\ref{def:popcontext},
$\RTMTn{k}+\{P_c\}_{c=1}^k=\TMTn{k}$.
Apply Theorem~\ref{thm:agent-stack-equivalence}.
\end{proof}

\begin{corollary}[Classical acceptor hierarchy]
\label{cor:classical-acceptor-hierarchy}
Under the acceptor specialization, both $\RTMTn{1}+P$ and $\OTMn{1}$ recognize exactly
the deterministic context-free languages.  For every fixed $k\ge2$, both
$\RTMTn{k}+\{P_c\}_{c=1}^k$ and $\OTMn{k}$ are
Turing-complete~\cite{Minsky1967,HopcroftUllman}.
\end{corollary}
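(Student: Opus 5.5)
The proof chains together the machine correspondences already established and then invokes the classical stack classifications. First, Definition~\ref{def:popcontext} gives $\RTMTn{k}+\{P_c\}_{c=1}^k=\TMTn{k}$. Theorem~\ref{thm:agent-stack-equivalence} gives $\Trans(\OTMn{k})=\Trans(\TMTn{k})$. Acceptance is the acceptor specialization of completed transductions: partition the halting states, or equivalently emit a one-symbol verdict on halting. So I need to check that the simulations in Theorem~\ref{thm:agent-stack-equivalence} carry the accept/reject partition. They do, because both directions preserve the simulated control state at transition boundaries, and only the final serialized round enters the simulated halting state. Assign $Q_{\mathcal O}^{\mathrm{acc}}$ (respectively $Q_A^{\mathrm{acc}}$) as the states whose simulated component is accepting. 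Hence, for every $k$, $\Acc(\OTMn{k})=\Acc(\TMTn{k})$. It therefore suffices to classify $\Acc(\TMTn{k})$.

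Next, Lemma~\ref{lem:classical-stack-normal-form} identifies $\Acc(\TMTn{k})$ with the languages accepted by deterministic $k$-stack Hopcroft--Ullman machines under the same completed-run convention with the end marker $\dashv$. For $k=1$ this is a deterministic pushdown acceptor with an end marker. For $\DCFL$ with final-state acceptance, the end marker is harmless: a DPDA may be assumed to read its whole input. After reading $\dashv$, it halts in an accepting state exactly when its final-state condition held at the end of input, running any pending $\varepsilon$-moves first.

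For the converse containment, the concern is that the completed-run convention also lets a DPDA reject by looping or by getting stuck; rejection only excludes the word. The standard DPDA normalization~\cite{HopcroftUllman} removes infinite $\varepsilon$-loops and blocking, so every $\DCFL$ has a DPDA that always consumes its input. Adding a $\dashv$ transition to accept/reject halts then gives a decider. Conversely, a one-stack $\TMTn{1}$ acceptor's language is the set of $w$ for which $w\dashv$ leads to an accepting halt. That is a DPDA accepting $L\dashv$ by final state, and since $\DCFL$ is closed under right quotient by a regular language (here the single marker), $L\in\DCFL$.

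The main obstacle is the $k\ge2$ case. One direction is Church--Turing: every language accepted by a $\TMTn{k}$ run is $\RE$, since runs are effectively simulable and we accept on reaching an accepting halt. For the other direction, I would use the two-stack simulation of a Turing machine~\cite{Minsky1967,HopcroftUllman}. The construction first copies the input onto one stack, using internal rounds and pushing symbols until $\dashv$ is consumed. It then represents the tape as the two stacks, left part and reversed right part, with one TM step taking $O(1)$ stack operations. Because the completed-run convention requires consuming $\dashv$ before halting, the copy phase consumes the entire input first, so a TM halt in an accepting state yields an accepting completed run. Non-acceptance by divergence is permitted, since $\RE$ only requires acceptance on members. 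Extra channels $c>2$ simply always stay, giving $\RE$ for every fixed $k\ge2$. Transferring back through Lemma~\ref{lem:classical-stack-normal-form} and the first paragraph yields both claimed identities for $\RTMTn{k}+\{P_c\}$ and $\OTMn{k}$.
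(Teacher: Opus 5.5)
Your proposal is correct and follows essentially the same route as the paper: identify $\RTMTn{k}+\{P_c\}_{c=1}^k$ with $\TMTn{k}$ by Definition~\ref{def:popcontext}, carry acceptance across the halting-preserving simulations of Theorem~\ref{thm:agent-stack-equivalence}, and transfer the classical one- and two-stack results through Lemma~\ref{lem:classical-stack-normal-form}, leaving the extra channels unused for $k>2$. Your additional detail makes explicit what the paper leaves to the classical citations: the end-marker and completed-run handling (DPDA normalization and the quotient by $\dashv$), copying the input before the split-tape simulation, and the Church--Turing upper bound.
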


\begin{proof}
Definition~\ref{def:popcontext} identifies
$\RTMTn{k}+\{P_c\}_{c=1}^k$ with $\TMTn{k}$, and
Theorem~\ref{thm:agent-stack-equivalence} identifies $\OTMn{k}$ with the same
transduction class.  The simulations in its proof preserve halting status, so the
identification also holds for the acceptor specializations.
Lemma~\ref{lem:classical-stack-normal-form} transfers the
classical Hopcroft--Ullman results to this normal form: one stack gives the
deterministic context-free languages, and two stacks simulate a Turing machine.
In the standard two-stack tape encoding, one stack stores the cells left of the head
with the nearest cell on top, and the other stores the current cell and the cells to its
right.  A head move writes the current symbol and transfers the boundary symbol between
the stacks; finite control stores the Turing-machine state and handles blanks.
In transcript terms, one tape cell is one block: a head move appends one block to one
channel and applies $P_c$ to the other, so the two retained transcripts hold the tape
split at the head.
For $k>2$, the additional channels may remain unused.
\end{proof}

Thus the acceptance hierarchy has two cases: $k=1$ gives $\DCFL$, and every fixed
$k\ge2$ gives $\RE$.  Two pop-enabled transcripts already reach $\RE$; additional
agents or channels preserve this language class.

For fixed $k$, an \emph{augmentation} $X$ of $\RTMTn{k}$ is an extension of its
configuration and transition interface by specified deterministic primitives;
$\RTMTn{k}+X$ denotes the resulting machine class.  Call $X$ \emph{effective} if every
extended configuration has a finite description and every added one-step primitive is
Turing-computable.  Call it \emph{Turing-complete} if $\RTMTn{k}+X$ simulates a
Turing machine.
For a fixed channel count $k$, write $X\preceq_{\mathrm{acc}}Y$ when every
language accepted by $\RTMTn{k}+X$ is accepted by $\RTMTn{k}+Y$.
Let $\RE$ denote the class of recursively enumerable languages.

\begin{theorem}[Acceptor maximality of PopContext for $k\ge2$]
\label{thm:universal-augmentation-acceptance}
For every $k\ge2$, $\RTMTn{k}+\{P_c\}_{c=1}^k$ accepts exactly $\RE$ and is
acceptor-equivalent to every effective Turing-complete augmentation.
Consequently, all effective universal augmentations form one equivalence class under
$\preceq_{\mathrm{acc}}$.
\end{theorem}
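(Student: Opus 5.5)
The plan is to prove two inclusions for $\RTMTn{k}+\{P_c\}_{c=1}^k$ and then let them sandwich every effective Turing-complete augmentation. For the lower bound, $\RE\subseteq\Acc(\RTMTn{k}+\{P_c\}_{c=1}^k)$, I will invoke Corollary~\ref{cor:classical-acceptor-hierarchy}. Definition~\ref{def:popcontext} identifies the class with $\TMTn{k}$, and Lemma~\ref{lem:classical-stack-normal-form} transfers the classical two-stack simulation of a Turing machine into this normal form; channels beyond the second stay unused. The only point to state carefully is the acceptance convention. An $\RE$ language is accepted by a Turing machine that halts in an accepting state on members and either rejects or runs forever on nonmembers. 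The simulating $\TMTn{k}$ acceptor first consumes $w\dashv$, pushing $w$ onto one channel and shuttling it across so that the tape reads correctly. It then runs the simulation with internal rounds and enters $Q_A^{\mathrm{acc}}$ exactly when the machine accepts. The run is then complete because $\dashv$ has already been consumed, so acceptance matches Definition~\ref{def:k-stack-transducer}.

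For the upper bound, I will show that every augmentation $X$ that is effective has $\Acc(\RTMTn{k}+X)\subseteq\RE$. This covers $X=\{P_c\}$ as well, since its configurations are finite words and its one-step map is computable. Effectiveness gives a finite description of every configuration and a Turing-computable one-step map. A Turing machine can therefore enumerate the steps of the run on $w$ starting from the encoded initial configuration $(q_0,w\dashv,\dots)$. It accepts when the run halts in an accepting state with the input fully consumed. This is a semi-decision procedure, so the accepted language is in $\RE$. Combining the two bounds gives $\Acc(\RTMTn{k}+\{P_c\})=\RE$.

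For maximality, let $X$ be effective and Turing-complete. Since $\RTMTn{k}+X$ simulates a Turing machine, I will argue that it accepts every $\RE$ language, and the upper bound gives the reverse inclusion. Therefore $\Acc(\RTMTn{k}+X)=\RE=\Acc(\RTMTn{k}+\{P_c\})$, so $X$ and $\{P_c\}$ are equivalent under $\preceq_{\mathrm{acc}}$ in both directions. By transitivity, all effective universal augmentations form a single equivalence class.

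The main obstacle is the step from ``simulates a Turing machine'' to ``accepts every $\RE$ language under the completed-run convention.'' The definition of Turing-complete is informal, so I will fix its reading as follows: for every Turing machine $M$, some instance of $\RTMTn{k}+X$ accepts exactly $L(M)$ under the paper's stopping and completion conventions. That reading holds for $\{P_c\}$ by the construction in the first paragraph. If a weaker notion of simulation were intended, such as simulating one fixed universal machine up to an encoding, I would add a remark. The remark would note that composing with the finite-state input loading phase, which is available in $\RTMTn{k}$ itself, together with the computable encoding recovers acceptance of every $\RE$ language. That argument relies only on $X$ being able to write the encoding of its input onto the simulated tape.
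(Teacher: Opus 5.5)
Your proposal is correct and follows the paper's own argument. The lower bound comes from the classical two-stack construction, transferred to the normal form by Lemma~\ref{lem:classical-stack-normal-form}; the upper bound comes from a Turing machine stepping through finitely described configurations; and every effective Turing-complete augmentation is then pinned at $\mathrm{RE}$, which gives the single equivalence class. Your explicit input-loading phase (consuming $w\dashv$ before simulating, so accepting runs are complete) and your reading of ``Turing-complete'' as accepting every $\mathrm{RE}$ language spell out what the paper's short proof leaves implicit.
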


\begin{proof}
By Definition~\ref{def:popcontext},
$\RTMTn{k}+\{P_c\}_{c=1}^k=\TMTn{k}$.
By Lemma~\ref{lem:classical-stack-normal-form}, the classical two-stack construction
applies to this normal form and gives Turing completeness for $k\ge2$.
For the upper bound, a Turing machine encodes each finite configuration description and
computes successive transitions.
Thus the channelwise PopContext augmentation and every effective Turing-complete
augmentation $X$ accept exactly $\RE$~\cite{HopcroftUllman,Minsky1967}.
\end{proof}

\begin{remark}[Intermediate classes]
Theorem~\ref{thm:universal-augmentation-acceptance} begins its comparison at
universality.  Channelwise PopContext is one effective route to $\RE$.
Nonuniversal nonmonotone primitives may realize intermediate or incomparable language
classes.  Standard examples include
one-counter languages, deterministic context-free languages, context-free languages,
and recursive languages~\cite{HopcroftUllman}.
\end{remark}

\begin{proposition}[Multi-agent simulation overhead]
\label{prop:simulation-overhead}
The multi-agent simulations have the following costs.
\begin{enumerate}[leftmargin=1.5em]
\item For fixed $k$, one $\TMTn{k}$ round costs at most $k$ steps of
$\OTMn{k}$; in particular, one $\TMTn{1}$ round costs one $\OTMn{1}$ step.  A classical
Hopcroft--Ullman pushdown transition with longest replacement word $L$ first expands into
at most $L+3$ such rounds by Lemma~\ref{lem:classical-stack-normal-form}.
\item One deterministic Turing-machine step costs $O(1)$ steps of $\OTMn{2}$, and
$\OTMn{2}$ simulates $\OTMn{k}$ with polynomial overhead.
\end{enumerate}
The simulations use $O(t)$ transcript blocks for $t$ stack operations.
\end{proposition}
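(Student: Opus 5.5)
The plan is to read each cost off the explicit simulations already constructed, and add one classical multitape-to-two-stack argument for the second item. For the first item, take the converse direction of the proof of Theorem~\ref{thm:agent-stack-equivalence}. A $\TMTn{k}$ round with action tuple $\mathbf a$ is serialized in increasing channel order as at most $k$ non-stay coordinate actions, each of which is one $\mathrm{push}(i,b)$ or $\mathrm{pop}(i)$ step. An all-stay tuple needs one idle step. This gives at most $\max(1,k)=k$ orchestration steps, and for $k=1$ exactly one step.

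The expansion of a Hopcroft--Ullman transition then comes from Lemma~\ref{lem:classical-stack-normal-form} with $k=1$. The compiled bound $2+k(L+1)$ specializes to $L+3$, namely one dispatch round, one pop, at most $L$ pushes, and one final round that emits the output. Composing with the previous count gives at most $L+3$ $\OTMn{1}$ steps per classical pushdown transition.

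For the second item I will use the two-stack tape encoding from the proof of Corollary~\ref{cor:classical-acceptor-hierarchy}. One Turing-machine step rewrites the current cell and moves the head, which amounts to a bounded number of pushes and pops across the two channels plus one finite-control update. That is $O(1)$ $\TMTn{2}$ rounds, hence $O(1)$ $\OTMn{2}$ steps by the first item.

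For $\OTMn{2}$ simulating $\OTMn{k}$, I will pass through Turing machines:
\begin{enumerate}[leftmargin=1.5em]
\item Simulate the $\TMTn{k}$ equivalent of the $\OTMn{k}$ (Theorem~\ref{thm:agent-stack-equivalence}, step-for-step) by a $k$-tape Turing machine, where each stack is a tape and each round costs $O(1)$ steps.
\item Convert to a single-tape machine with the classical quadratic overhead: after $t$ steps the tapes have length $O(t)$, so each step costs $O(t)$ sweeps.
\item Simulate the single-tape machine by $\OTMn{2}$ at $O(1)$ cost per step.
\end{enumerate}
The total is $O(t^2)$, which is polynomial.

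The main obstacle is this polynomial-overhead claim. The paper has not yet given a direct $k$-stack to two-stack construction, so I route through the tape reduction and must check that input consumption and output emission survive it. The input can be pre-consumed when required, but for transductions this changes the interleaving of reads and writes. I expect to handle it by keeping the finite status and pending output in $Q_{\mathrm{ctl}}$ and consuming input only at simulated input rounds. The simulated machine's input cursor is the global unread suffix itself, so no tape copy of the input is needed.

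For the space bound, every step of each simulation pushes at most one block. Hence the retained transcript length after $t$ stack operations is at most $t$, and the Turing-tape encoding keeps total length $O(t)$ because the tape grows by at most one cell per step.
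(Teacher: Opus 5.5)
Your proposal is correct and follows the paper's own route. The first item comes from the serializing (reverse) simulation in the proof of Theorem~\ref{thm:agent-stack-equivalence} together with the $k=1$ case $2+k(L+1)=L+3$ of Lemma~\ref{lem:classical-stack-normal-form}; the Turing-step bound comes from the split-tape two-stack encoding; and the polynomial bound passes through the forward simulation into $\TMTn{k}$, then a Turing machine, then back through two stacks. Your explicit multitape-to-single-tape step and your online handling of the shared unread suffix spell out what the paper's one-line argument leaves implicit.
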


\begin{proof}
In the reverse simulation of
Theorem~\ref{thm:agent-stack-equivalence}, an all-stay $\TMTn{k}$ round is one
$\mathrm{idle}$ round.  Otherwise, each non-stay coordinate becomes one
$\mathrm{push}(i,b)$ or $\mathrm{pop}(i)$ round, so at most $k$ rounds are needed and
at most $k-1$ are internal and silent.  Encoding one stack symbol per block gives the
first bound and the storage cost.  The standard split-tape simulation then gives constant
overhead per Turing step.

The forward simulation in Theorem~\ref{thm:agent-stack-equivalence} maps each
orchestration round to one round of a fixed member of $\TMTn{k}$.  Simulating that
transducer on a Turing machine and returning through the two-stack construction gives
the polynomial bound.
\end{proof}

\section{Discussion}

\paragraph{Finite versus unbounded consulted memory}
With finite precision, bounded transcript visibility yields finite consulted state.  A
growing exact KV cache, writable store, suitable tool oracle, or transcript operation
that re-exposes hidden blocks supplies unbounded discrete memory
\cite{OrenEtAl2024,TiwariNalliDeshpande2026ToolOracles,PerezBarceloMarinkovic2021}.
Bounded read-only RAG supplies bounded external input~\cite{LewisEtAl2020RAG}.  In the
abstract controller, PopContext re-exposes the most recently hidden block by stack pop,
giving deterministic pushdown power with one pop-enabled transcript and Turing universality with two.

\paragraph{Chain-of-thought and repetitive loops}
Chain-of-thought expressivity~\cite{MerrillSabharwal2024CoT,FengEtAl2023CoT} scales with
decoding steps when each step consults a growing exact transcript and log-precision
arithmetic.
If each step consults only a fixed finite summary, Corollary~\ref{cor:cot-cycle} applies:
an infinite scratchpad run must eventually cycle.
Repetitive tool-and-thought loops under a token budget exhibit the same limitation.
Growing exact writable memory, unbounded agent population, and operations such as $\PopContext$
provide practical routes beyond this fixed-summary regime.

\paragraph{Formal-language interpretation}
Relative to Hahn~\cite{Hahn2020} and Merrill et al.~\cite{MerrillSabharwal2023,MerrillSabharwalSmith2022},
who upper-bound fixed architectures, we classify the transcript-management layer around a
fixed architecture.  We identify a deployed transcript operation with the pop action of
our normal form, connect that form to the classical presentation through
Lemma~\ref{lem:classical-stack-normal-form}, and prove that the specified
fixed-population monotone alternatives remain finite-state.
Pushdown models connect the transcript interface to deterministic
parsing~\cite{Knuth1965}, grammar-constrained or checkable
synthesis~\cite{AlurEtAl2013,KobayashiEtAl2025}, and instruction
languages~\cite{Kuhn2014,VeizagaEtAl2021}.

\paragraph{Context compaction}
For long-session compaction~\cite{RaeEtAl2019Compressive,DaiEtAl2025}, the relevant distinction is
whether hidden or discarded exact text can later be re-exposed.  A finite summary
remains finite-state.  PopContext-style re-exposure supplies stack memory.

\paragraph{Unbounded Spawn avoids finite-population collapse}
Unbounded \textbf{Spawn} creates an unbounded collection of local states even if each
agent uses fixed precision.
$\PopContext$ recovers hidden memory at a fixed population.  These operations add
distinct state resources.

\section{Conclusion}

The fixed-population monotone collapse is the paper's novel theoretical result.
Any finite collection of agents governed by the standard local-call, append, routing, and
visible-copy operations realizes exactly the finite-state transductions so cannot accurately
parse any general programming language.
This result applies directly to most deployments of Transformers in production systems,
and it gives a formal reason why synthesis and verification pipelines supply their
grammars and checkers from outside the learned
component~\cite{AlurEtAl2013,KobayashiEtAl2025}.

The bounded-transcript correspondence represents the same management layer as our
$\RTMTn{k}$, the pop-free case of the Transcript-Managed Transducer.  Channelwise
\PopContext admits the one remaining action and returns the full $\TMTn{k}$.
One pop-enabled transcript gives the deterministic context-free
languages, and two give the recursively enumerable languages.  The channels may reside in
one agent or in separate orchestrated agents.  For $k\ge2$, every effective universal
augmentation accepts the same language class, $\RE$.  One operation extends the
transcript-management vocabulary, while the Transformer, its weights, its token protocol,
and the amount of retained text stay as they are.  Deployed single- and multi-agent
systems therefore admit a transcript-managed reading, which opens a route 
to strengthening existing deployments by adding pop locally in the transcript
manager rather than changing the model.

\bibliographystyle{IEEEtran}
\bibliography{main}

@article{PerezBarceloMarinkovic2021,
  author = {Jorge P{\'e}rez and Pablo Barcel{\'o} and Javier Marinkovic},
  title = {Attention is {Turing}-Complete},
  journal = {Journal of Machine Learning Research},
  volume = {22},
  number = {75},
  pages = {1--35},
  year = {2021},
  comment = {Summary: Proves Turing completeness for attention models using arbitrary-precision internal representations. Relevance: Contrasts that unbounded numeric-state regime with this paper's fixed-precision model. Access: https://jmlr.org/papers/v22/20-302.html}
}

@inproceedings{VaswaniEtAl2017,
  author = {Ashish Vaswani and Noam Shazeer and Niki Parmar and Jakob Uszkoreit and Llion Jones and Aidan N. Gomez and {\L}ukasz Kaiser and Illia Polosukhin},
  title = {Attention Is All You Need},
  booktitle = {Advances in Neural Information Processing Systems 30},
  year = {2017},
  eprint = {1706.03762},
  archivePrefix = {arXiv},
  primaryClass = {cs.CL},
  comment = {Summary: Introduces the Transformer architecture based on self-attention. Relevance: Provides the baseline architecture whose transcript-management capabilities are studied here. Access: https://arxiv.org/abs/1706.03762}
}

@misc{RadfordEtAl2019GPT2,
  author = {Alec Radford and Jeffrey Wu and Rewon Child and David Luan and Dario Amodei and Ilya Sutskever},
  title = {Language Models are Unsupervised Multitask Learners},
  year = {2019},
  howpublished = {OpenAI technical report},
  url = {https://cdn.openai.com/better-language-models/language_models_are_unsupervised_multitask_learners.pdf},
  comment = {Summary: Introduces GPT-2 with a 1024-token context. Relevance: Provides a concrete fixed-context scale for the physical visibility radius.}
}

@inproceedings{BrownEtAl2020GPT3,
  author = {Tom B. Brown and Benjamin Mann and Nick Ryder and Melanie Subbiah and Jared Kaplan and Prafulla Dhariwal and Arvind Neelakantan and Pranav Shyam and Girish Sastry and Amanda Askell and Sandhini Agarwal and Ariel Herbert-Voss and Gretchen Krueger and Tom Henighan and Rewon Child and Aditya Ramesh and Daniel M. Ziegler and Jeffrey Wu and Clemens Winter and Christopher Hesse and Mark Chen and Eric Sigler and Mateusz Litwin and Scott Gray and Benjamin Chess and Jack Clark and Christopher Berner and Sam McCandlish and Alec Radford and Ilya Sutskever and Dario Amodei},
  title = {Language Models are Few-Shot Learners},
  booktitle = {Advances in Neural Information Processing Systems},
  volume = {33},
  pages = {1877--1901},
  year = {2020},
  eprint = {2005.14165},
  archivePrefix = {arXiv},
  primaryClass = {cs.CL},
  comment = {Summary: Introduces GPT-3 and reports training on a 2048-token context window. Relevance: Provides a second concrete fixed-context scale for the physical visibility radius.}
}

@misc{DehghaniEtAl2018,
  author = {Mostafa Dehghani and Stephan Gouws and Oriol Vinyals and Jakob Uszkoreit and {\L}ukasz Kaiser},
  title = {Universal Transformers},
  year = {2018},
  note = {arXiv:1807.03819},
  eprint = {1807.03819},
  archivePrefix = {arXiv},
  primaryClass = {cs.LG},
  url = {https://arxiv.org/abs/1807.03819},
  comment = {Summary: Extends Transformers with recurrent depth and adaptive computation. Relevance: Illustrates a recurrent architectural route to stronger computational behavior.}
}

@inproceedings{OrenEtAl2024,
  author = {Matanel Oren and Michael Hassid and Nir Yarden and Yossi Adi and Roy Schwartz},
  title = {Transformers Are Multi-State {RNN}s},
  booktitle = {Proceedings of the 2024 Conference on Empirical Methods in Natural Language Processing},
  pages = {18724--18741},
  year = {2024},
  doi = {https://doi.org/10.18653/v1/2024.emnlp-main.1043},
  comment = {Summary: Characterizes decoder-only Transformers as multi-state recurrent systems whose state grows with the key-value cache. Relevance: Supports the distinction between fixed consulted state and unbounded retained cache state.}
}

@book{HopcroftUllman,
  author = {John E. Hopcroft and Rajeev Motwani and Jeffrey D. Ullman},
  title = {Introduction to Automata Theory, Languages, and Computation},
  publisher = {Pearson},
  edition = {3},
  year = {2007},
  comment = {Summary: Develops the classical hierarchy of finite automata, pushdown automata, Turing machines, and randomized computation. Relevance: Supplies the automata-theoretic definitions, simulations, and random-tape construction used throughout this paper.}
}

@article{Atig2012MultiPushdown,
  author = {Mohamed Faouzi Atig},
  title = {Model-Checking of Ordered Multi-Pushdown Automata},
  journal = {Logical Methods in Computer Science},
  volume = {8},
  number = {3},
  pages = {1--31},
  year = {2012},
  doi = {https://doi.org/10.2168/LMCS-8(3:20)2012},
  eprint = {1209.1916},
  archivePrefix = {arXiv},
  primaryClass = {cs.LO},
  comment = {Summary: Recalls the standard multi-pushdown automaton with finitely many LIFO stacks before imposing an ordered-pop restriction. Relevance: Supplies a precise prior multi-stack definition against which our deterministic transducer normal form is compared.}
}

@book{Minsky1967,
  author = {Marvin L. Minsky},
  title = {Computation: Finite and Infinite Machines},
  publisher = {Prentice-Hall},
  year = {1967},
  isbn = {0131655639},
  comment = {Summary: Presents foundational models of finite and universal computation, including machine simulations. Relevance: Supports the classical two-stack and Turing-machine equivalences used in the universality argument. Access: https://www.openphilanthropy.org/files/Focus_Areas/AI/Minsky_1967.pdf}
}

@article{Hahn2020,
  author = {Michael Hahn},
  title = {Theoretical Limitations of Self-Attention in Neural Sequence Models},
  journal = {Transactions of the Association for Computational Linguistics},
  volume = {8},
  pages = {156--171},
  year = {2020},
  doi = {https://doi.org/10.1162/tacl_a_00306},
  comment = {Summary: Establishes formal-language limitations of self-attention under bounded architectural resources. Relevance: Positions this paper's transcript primitive relative to known Transformer expressivity barriers.}
}

@article{MerrillSabharwal2023,
  author = {William Merrill and Ashish Sabharwal},
  title = {The Parallelism Tradeoff: Limitations of Log-Precision Transformers},
  journal = {Transactions of the Association for Computational Linguistics},
  volume = {11},
  pages = {531--545},
  year = {2023},
  doi = {https://doi.org/10.1162/tacl_a_00562},
  comment = {Summary: Relates log-precision Transformers to uniform constant-depth threshold circuits. Relevance: Provides a circuit-complexity upper bound for comparison with the fixed-precision transcript model.}
}

@inproceedings{BhattamishraEtAl2020,
  author = {Satwik Bhattamishra and Kabir Ahuja and Navin Goyal},
  title = {On the Ability and Limitations of Transformers to Recognize Formal Languages},
  booktitle = {Proceedings of the 2020 Conference on Empirical Methods in Natural Language Processing},
  pages = {7096--7116},
  year = {2020},
  doi = {https://doi.org/10.18653/v1/2020.emnlp-main.576},
  comment = {Summary: Constructs Transformers for restricted counter languages and reports positive finite-range formal-language generalization results, including for \(a^n b^n c^n\), using growing full-prefix attention. Relevance: Provides the direct contrasting experiment that separates approximate precision- and context-dependent counter computation from the paper's exact fixed-visibility machine classification.}
}

@inproceedings{DeletangEtAl2023,
  author = {Gr{\'e}goire Del{\'e}tang and Anian Ruoss and Jordi Grau-Moya and Tim Genewein and Li Kevin Wenliang and Elliot Catt and Chris Cundy and Marcus Hutter and Shane Legg and Joel Veness and Pedro A. Ortega},
  title = {Neural Networks and the {Chomsky} Hierarchy},
  booktitle = {International Conference on Learning Representations},
  year = {2023},
  eprint = {2207.02098},
  archivePrefix = {arXiv},
  primaryClass = {cs.LG},
  comment = {Summary: Empirically compares standard and structured-memory neural networks on transduction tasks spanning the Chomsky hierarchy. Relevance: Supplies the closest broad empirical precedent for relating learned stack or tape memory to formal-language length generalization. Access: https://arxiv.org/abs/2207.02098}
}

@article{StroblEtAl2024Survey,
  author = {Lena Strobl and William Merrill and Gail Weiss and David Chiang and Dana Angluin},
  title = {What Formal Languages Can Transformers Express? {A} Survey},
  journal = {Transactions of the Association for Computational Linguistics},
  volume = {12},
  pages = {543--561},
  year = {2024},
  doi = {https://doi.org/10.1162/tacl_a_00663},
  comment = {Summary: Surveys Transformer formal-language expressivity under differing architectural, precision, positional, and uniformity assumptions. Relevance: Provides the canonical synthesis for the upper-bound literature against which the transcript-management model is positioned.}
}

@inproceedings{MerrillSabharwal2024CoT,
  author = {William Merrill and Ashish Sabharwal},
  title = {The Expressive Power of Transformers with Chain of Thought},
  booktitle = {International Conference on Learning Representations},
  year = {2024},
  comment = {Summary: Characterizes how polynomially many chain-of-thought steps increase Transformer expressive power. Relevance: Contrasts growing exact scratchpads with fixed finite consulted state. Access: https://openreview.net/forum?id=CDmerQ37Zs}
}

@inproceedings{FengEtAl2023CoT,
  author = {Guhao Feng and Bohang Zhang and Yuntian Gu and Haotian Ye and Di He and Liwei Wang},
  title = {Towards Revealing the Mystery behind Chain of Thought: A Theoretical Perspective},
  booktitle = {Advances in Neural Information Processing Systems},
  volume = {36},
  year = {2023},
  comment = {Summary: Analyzes chain of thought as added effective depth for arithmetic and dynamic-programming tasks. Relevance: Supports the discussion of transcript growth as a computational resource. Access: https://proceedings.neurips.cc/paper_files/paper/2023/hash/dfc310e81992d2e4cedc09ac47eff13e-Abstract.html}
}

@article{MerrillSabharwalSmith2022,
  author = {William Merrill and Ashish Sabharwal and Noah A. Smith},
  title = {Saturated Transformers are Constant-Depth Threshold Circuits},
  journal = {Transactions of the Association for Computational Linguistics},
  volume = {10},
  pages = {843--856},
  year = {2022},
  doi = {https://doi.org/10.1162/tacl_a_00493},
  comment = {Summary: Shows that saturated Transformers can be simulated by constant-depth threshold circuits. Relevance: Gives an earlier restricted-attention upper bound against which this model is compared.}
}

@inproceedings{StearnsHartmanisLewis1965,
  author = {R. E. Stearns and J. Hartmanis and Lewis, II, P. M.},
  title = {Hierarchies of Memory Limited Computations},
  booktitle = {6th Annual Symposium on Switching Circuit Theory and Logical Design},
  pages = {179--190},
  year = {1965},
  doi = {https://doi.org/10.1109/FOCS.1965.11},
  comment = {Summary: Establishes strict hierarchies for computations with bounded memory. Relevance: Provides classical context for the paper's memory-sensitive expressivity thresholds.}
}

@inproceedings{RaeEtAl2019Compressive,
  author = {Jack W. Rae and Anna Potapenko and Siddhant M. Jayakumar and Chloe Hillier and Timothy P. Lillicrap},
  title = {Compressive Transformers for Long-Range Sequence Modelling},
  booktitle = {International Conference on Learning Representations},
  year = {2020},
  eprint = {1911.05507},
  archivePrefix = {arXiv},
  primaryClass = {cs.LG},
  comment = {Summary: Compresses past activations into a secondary memory instead of discarding them. Relevance: Canonical lossy finite-memory compaction for Transformer long-context models. Access: https://openreview.net/forum?id=SylKikSYDH}
}

@inproceedings{DaiEtAl2019TransformerXL,
  author = {Zihang Dai and Zhilin Yang and Yiming Yang and Jaime Carbonell and Quoc V. Le and Ruslan Salakhutdinov},
  title = {{Transformer}-{XL}: Attentive Language Models beyond a Fixed-Length Context},
  booktitle = {Proceedings of the 57th Annual Meeting of the Association for Computational Linguistics},
  pages = {2978--2988},
  year = {2019},
  doi = {https://doi.org/10.18653/v1/P19-1285},
  comment = {Summary: Introduces Transformer-XL with segment-level recurrence over cached hidden states and relative positional encodings. Relevance: The base long-context recurrence model on which later layerwise-memory and compressive variants build.}
}

@inproceedings{RaeRazavi2020,
  author = {Jack W. Rae and Ali Razavi},
  title = {Do Transformers Need Deep Long-Range Memory?},
  booktitle = {Proceedings of the 58th Annual Meeting of the Association for Computational Linguistics},
  pages = {7524--7529},
  year = {2020},
  doi = {https://doi.org/10.18653/v1/2020.acl-main.672},
  comment = {Summary: Follow-up to Transformer-XL showing that long-range memory need not be placed in every layer. Relevance: Separates layerwise long-range memory allocation from finite-summary compaction.}
}

@inproceedings{DaiEtAl2025,
  author = {Yuhong Dai and Jianxun Lian and Yitian Huang and Wei Zhang and Mingyang Zhou and Mingqi Wu and Xing Xie and Hao Liao},
  title = {Pretraining Context Compressor for Large Language Models with Embedding-Based Memory},
  booktitle = {Proceedings of the 63rd Annual Meeting of the Association for Computational Linguistics (Volume 1: Long Papers)},
  pages = {28715--28732},
  year = {2025},
  doi = {https://doi.org/10.18653/v1/2025.acl-long.1394},
  comment = {Summary: Pretrains an embedding-based context compressor for long-context language models. Relevance: Provides a modern example of finite-summary context management.}
}

@inproceedings{LewisEtAl2020RAG,
  author = {Patrick Lewis and Ethan Perez and Aleksandra Piktus and Fabio Petroni and Vladimir Karpukhin and Naman Goyal and Heinrich K{\"u}ttler and Mike Lewis and Wen-tau Yih and Tim Rockt{\"a}schel and Sebastian Riedel and Douwe Kiela},
  title = {Retrieval-Augmented Generation for Knowledge-Intensive {NLP} Tasks},
  booktitle = {Advances in Neural Information Processing Systems 33},
  year = {2020},
  eprint = {2005.11401},
  archivePrefix = {arXiv},
  primaryClass = {cs.CL},
  comment = {Summary: Introduces retrieval-augmented generation using an external nonparametric document memory. Relevance: Serves as the canonical read-only external-store comparison. Access: https://arxiv.org/abs/2005.11401}
}

@article{Knuth1965,
  author = {Donald E. Knuth},
  title = {On the Translation of Languages from Left to Right},
  journal = {Information and Control},
  volume = {8},
  number = {6},
  pages = {607--639},
  year = {1965},
  doi = {https://doi.org/10.1016/S0019-9958(65)90426-2},
  comment = {Summary: Develops LR parsing for deterministic context-free languages. Relevance: Supports deterministic parsing as an application of the pushdown layer.}
}

@inproceedings{AlurMadhusudan2004,
  author = {Rajeev Alur and P. Madhusudan},
  title = {Visibly Pushdown Languages},
  booktitle = {Proceedings of the Thirty-Sixth Annual {ACM} Symposium on Theory of Computing},
  pages = {202--211},
  year = {2004},
  doi = {https://doi.org/10.1145/1007352.1007390},
  comment = {Summary: Introduces visibly pushdown languages, where the input symbol determines push/pop. Relevance: Supplies a classical intermediate stack restriction between unrestricted pushdown and finite-state control.}
}

@inproceedings{AlurEtAl2013,
  author = {Rajeev Alur and Rastislav Bodik and Garvit Juniwal and Milo M. K. Martin and Mukund Raghothaman and Sanjit A. Seshia and Rishabh Singh and Armando Solar-Lezama and Emina Torlak and Abhishek Udupa},
  title = {Syntax-Guided Synthesis},
  booktitle = {Proceedings of the 2013 Formal Methods in Computer-Aided Design},
  pages = {1--17},
  year = {2013},
  comment = {Summary: Introduces syntax-guided synthesis, where grammars constrain the search for programs. Relevance: Motivates grammar-constrained and checkable synthesis as a target application. Access: https://people.eecs.berkeley.edu/~sseshia/pubs/b2hd-alur-fmcad13.html}
}

@article{Kuhn2014,
  author = {Tobias Kuhn},
  title = {A Survey and Classification of Controlled Natural Languages},
  journal = {Computational Linguistics},
  volume = {40},
  number = {1},
  pages = {121--170},
  year = {2014},
  doi = {https://doi.org/10.1162/COLI_a_00168},
  comment = {Summary: Surveys and classifies controlled natural languages by syntax, semantics, and use. Relevance: Supports controlled instruction languages as a sample-first compilation target.}
}

@article{VeizagaEtAl2021,
  author = {Alvaro Veizaga and Mauricio Alf{\'e}rez and Damiano Torre and Mehrdad Sabetzadeh and Lionel Briand},
  title = {On Systematically Building a Controlled Natural Language for Functional Requirements},
  journal = {Empirical Software Engineering},
  volume = {26},
  number = {5},
  pages = {90},
  year = {2021},
  doi = {https://doi.org/10.1007/s10664-021-09956-6},
  comment = {Summary: Presents a systematic method for constructing controlled natural languages for requirements. Relevance: Provides an applied model for designing constrained instruction languages.}
}

@article{KobayashiEtAl2025,
  author = {Naoki Kobayashi and Taro Sekiyama and Issei Sato and Hiroshi Unno},
  title = {Towards Neural-Network-Guided Program Synthesis and Verification},
  journal = {Formal Methods in System Design},
  year = {2025},
  doi = {https://doi.org/10.1007/s10703-024-00468-9},
  comment = {Summary: Surveys and advances neural guidance for program synthesis and verification. Relevance: Connects learned controllers with formally checkable program construction.}
}

@inproceedings{JoulinMikolov2015,
  author = {Armand Joulin and Tomas Mikolov},
  title = {Inferring Algorithmic Patterns with Stack-Augmented Recurrent Nets},
  booktitle = {Advances in Neural Information Processing Systems 28},
  year = {2015},
  eprint = {1503.01007},
  archivePrefix = {arXiv},
  primaryClass = {cs.NE},
  comment = {Summary: Uses soft push, pop, and no-op actions in a superposition stack controlled by a recurrent network. Relevance: Provides an independent differentiable-stack recurrence distinct from the continuous-strength update used here. Access: https://arxiv.org/abs/1503.01007}
}

@inproceedings{GrefenstetteEtAl2015,
  author = {Edward Grefenstette and Karl Moritz Hermann and Mustafa Suleyman and Phil Blunsom},
  title = {Learning to Transduce with Unbounded Memory},
  booktitle = {Advances in Neural Information Processing Systems 28},
  year = {2015},
  eprint = {1506.02516},
  archivePrefix = {arXiv},
  primaryClass = {cs.CL},
  comment = {Summary: Introduces differentiable stacks, queues, and deques using continuous strengths, newest-first deletion, and a unit-capacity read. Relevance: Supplies the strength update and top-read recurrence adopted by the training surrogate. Access: https://arxiv.org/abs/1506.02516}
}

@inproceedings{ChungSiegelmann2021,
  author = {Stephen Chung and Hava T. Siegelmann},
  title = {Turing Completeness of Bounded-Precision Recurrent Neural Networks},
  booktitle = {Advances in Neural Information Processing Systems 34},
  year = {2021},
  pages = {28431--28441},
  comment = {Summary: Proves that a bounded-precision RNN with two dynamically growing stack modules can simulate a universal Turing machine and extends the construction to stack-augmented RNNs, including Grefenstette-style neural stacks. Relevance: Locates differentiable-stack universality in the growth of two exact stack memories rather than continuous strengths alone. Access: https://proceedings.neurips.cc/paper/2021/hash/ef452c63f81d0105dd4486f775adec81-Abstract.html}
}

@inproceedings{DuSellChiang2020,
  author = {Brian DuSell and David Chiang},
  title = {Learning Context-free Languages with Nondeterministic Stack {RNN}s},
  booktitle = {Proceedings of the 24th Conference on Computational Natural Language Learning},
  year = {2020},
  pages = {507--519},

  doi = {https://doi.org/10.18653/v1/2020.conll-1.41},
  comment = {Summary: Uses nondeterministic stack recurrent networks to learn context-free languages. Relevance: Provides a learned pushdown model for comparison with the deterministic transcript controller.}
}

@inproceedings{DuSellChiang2024,
  author = {Brian DuSell and David Chiang},
  title = {Stack Attention: Improving the Ability of Transformers to Model Hierarchical Patterns},
  booktitle = {International Conference on Learning Representations},
  year = {2024},
  comment = {Summary: Adapts superposition and nondeterministic differentiable stacks into Transformer attention operators. Relevance: Provides the closest prior Transformer integration of differentiable stack computation. Access: https://openreview.net/forum?id=XVhm3X8Fum}
}

@inproceedings{MurtyEtAl2023,
  author = {Shikhar Murty and Pratyusha Sharma and Jacob Andreas and Christopher D. Manning},
  title = {Pushdown Layers: Encoding Recursive Structure in Transformer Language Models},
  booktitle = {Proceedings of the 2023 Conference on Empirical Methods in Natural Language Processing},
  year = {2023},
  pages = {3233--3247},

  doi = {https://doi.org/10.18653/v1/2023.emnlp-main.195},
  comment = {Summary: Uses a learned stack tape to modulate Transformer attention over incrementally parsed prefixes. Relevance: Provides a distinct Transformer architecture with explicit soft pushdown structure.}
}

@inproceedings{PressEtAl2021ALiBi,
  author = {Ofir Press and Noah A. Smith and Mike Lewis},
  title = {Train Short, Test Long: Attention with Linear Biases Enables Input Length Extrapolation},
  booktitle = {International Conference on Learning Representations},
  year = {2022},
  eprint = {2108.12409},
  archivePrefix = {arXiv},
  primaryClass = {cs.CL},
  comment = {Summary: Introduces linear attention biases that enable extrapolation beyond training lengths. Relevance: Supports the claim that length generalization depends on positional encoding. Access: https://openreview.net/forum?id=R8sQPpGCv0}
}

@inproceedings{KazemnejadEtAl2023PE,
  author = {Amirhossein Kazemnejad and Inkit Padhi and Karthikeyan Natesan Ramamurthy and Payel Das and Siva Reddy},
  title = {The Impact of Positional Encoding on Length Generalization in Transformers},
  booktitle = {Advances in Neural Information Processing Systems},
  volume = {36},
  year = {2023},
  comment = {Summary: Empirically and theoretically studies how positional encodings affect Transformer length generalization. Relevance: Establishes positional encoding as a separate constraint on usable context. Access: https://proceedings.neurips.cc/paper_files/paper/2023/hash/4e85362c02172c0c6567ce593122d31c-Abstract-Conference.html}
}

@article{SuEtAl2024RoFormer,
  author = {Jianlin Su and Murtadha Ahmed and Yu Lu and Shengfeng Pan and Wen Bo and Yunfeng Liu},
  title = {{RoFormer}: Enhanced Transformer with Rotary Position Embedding},
  journal = {Neurocomputing},
  volume = {568},
  pages = {127063},
  year = {2024},
  eprint = {2104.09864},
  archivePrefix = {arXiv},
  primaryClass = {cs.CL},
  doi = {https://doi.org/10.1016/j.neucom.2023.127063},
  comment = {Summary: Introduces rotary position embedding for encoding relative position through feature rotations. Relevance: Defines the positional mechanism whose finite-precision limits are discussed.}
}

@inproceedings{MenEtAl2024RoPEBase,
  author = {Mingyu Xu and Xin Men and Bingning Wang and Qingyu Zhang and Hongyu Lin and Yaojie Lu and Xianpei Han and Weipeng Chen},
  title = {Base of {RoPE} Bounds Context Length},
  booktitle = {Advances in Neural Information Processing Systems},
  volume = {37},
  year = {2024},
  comment = {Summary: Shows that the RoPE base constrains effective context length and proposes scaling guidance. Relevance: Supports the finite-context consequences of rotary-position parameterization. Access: https://proceedings.neurips.cc/paper_files/paper/2024/hash/9f12dd32d552f3ad9eaa0e9dfec291be-Abstract-Conference.html}
}

@misc{Liu2026RoPEPhase,
  author = {Feilong Liu},
  title = {Rotary Positional Embeddings as Phase Modulation: Theoretical Bounds on the {RoPE} Base for Long-Context Transformers},
  year = {2026},
  eprint = {2602.10959},
  archivePrefix = {arXiv},
  primaryClass = {cs.LG},
  url = {https://arxiv.org/abs/2602.10959},
  comment = {Summary: Derives precision- and depth-dependent bounds on the RoPE base using a phase-modulation view. Relevance: Supports the phase-erasure limitation under fixed floating-point arithmetic.}
}

@inproceedings{WeissGoldbergYahav2018,
  author = {Gail Weiss and Yoav Goldberg and Eran Yahav},
  title = {Extracting Automata from Recurrent Neural Networks Using Queries and Counterexamples},
  booktitle = {Proceedings of the 35th International Conference on Machine Learning},
  year = {2018},
  comment = {Summary: Given a fixed, already-trained recurrent network, extracts a finite automaton afterward using active queries and counterexamples. Relevance: Motivates post-training extraction and certification of learned finite-state controllers without treating extraction as part of network training. Access: https://proceedings.mlr.press/v80/weiss18a.html}
}

@inproceedings{LiWang2026EfficientTM,
  author = {Qian Li and Yuyi Wang},
  title = {Efficient {Turing} Machine Simulation with {Transformers}},
  booktitle = {International Conference on Learning Representations},
  year = {2026},
  eprint = {2512.00003},
  archivePrefix = {arXiv},
  primaryClass = {cs.LG},
  comment = {Summary: Simulates space-bounded Turing machines with constant-bit Transformers using an $O(s(n))$ context and $O(s(n)^c)$ chain-of-thought steps per simulated step. Relevance: Contrasts efficient computation with growing exact context and CoT against this paper's fixed-visibility threshold. Access: https://arxiv.org/abs/2512.00003}
}

@inproceedings{YaoEtAl2023ReAct,
  author = {Shunyu Yao and Jeffrey Zhao and Dian Yu and Nan Du and Izhak Shafran and Karthik Narasimhan and Yuan Cao},
  title = {{ReAct}: Synergizing Reasoning and Acting in Language Models},
  booktitle = {International Conference on Learning Representations},
  year = {2023},
  comment = {Summary: Interleaves language-model reasoning traces with actions in an external environment. Relevance: Provides a representative tool-using scratchpad protocol. Access: https://openreview.net/forum?id=WE_vluYUL-X}
}

@inproceedings{WuEtAl2024AutoGen,
  author = {Qingyun Wu and Gagan Bansal and Jieyu Zhang and Yiran Wu and Beibin Li and Erkang Zhu and Li Jiang and Xiaoyun Zhang and Shaokun Zhang and Jiale Liu and Ahmed Hassan Awadallah and Ryen W. White and Doug Burger and Chi Wang},
  title = {{AutoGen}: Enabling Next-Gen {LLM} Applications via Multi-Agent Conversation},
  booktitle = {First Conference on Language Modeling},
  year = {2024},
  comment = {Summary: Presents a framework for applications built from conversing, tool-using language-model agents. Relevance: Supplies a practical multi-agent handoff comparison. Access: https://openreview.net/forum?id=BAakY1hNKS}
}

@misc{RizviMartelEtAl2025MultiAgent,
  author = {Michael Rizvi-Martel and Satwik Bhattamishra and Neil Rathi and Guillaume Rabusseau and Michael Hahn},
  title = {Benefits and Limitations of Communication in Multi-Agent Reasoning},
  year = {2025},
  eprint = {2510.13903},
  archivePrefix = {arXiv},
  primaryClass = {cs.LG},
  url = {https://arxiv.org/abs/2510.13903},
  comment = {Summary: Derives tradeoffs among agent count, communication, and depth for multi-agent reasoning tasks. Relevance: Positions this paper's fixed-population transcript hierarchy relative to communication-based gains.}
}

@inproceedings{TiwariNalliDeshpande2026ToolOracles,
  author = {Utkarsh Tiwari and Sai Soumya Nalli and Amit Deshpande},
  title = {Modeling Tool Use in Transformers via Computation Oracles},
  booktitle = {Latent and Implicit Thinking Workshop at {ICLR}},
  year = {2026},
  comment = {Summary: Models Transformer tool calls as computation-oracle access and derives expressivity gains. Relevance: Identifies oracle access as a separate route beyond fixed finite consulted state. Access: https://openreview.net/forum?id=c5LfhgWztB}
}

@misc{HouEtAl2026AgenticLoops,
  author = {Xinyi Hou and Shenao Wang and Yanjie Zhao and Haoyu Wang},
  title = {When Agents Do Not Stop: Uncovering Infinite Agentic Loops in {LLM} Agents},
  year = {2026},
  eprint = {2607.01641},
  archivePrefix = {arXiv},
  primaryClass = {cs.SE},
  comment = {Summary: Audits real-world LLM-agent repositories and manually confirms failures involving unbounded repetition of model calls, tools, workflow transitions, or handoffs. Relevance: Supplies empirical evidence for the repetitive tool-and-thought cycling that our finite-state result characterizes formally.}
}

@article{HaoAngluinFrank2022,
  author = {Yiding Hao and Dana Angluin and Robert Frank},
  title = {Formal Language Recognition by Hard Attention Transformers: Perspectives from Circuit Complexity},
  journal = {Transactions of the Association for Computational Linguistics},
  volume = {10},
  pages = {800--810},
  year = {2022},
  doi = {10.1162/tacl_a_00490},
  comment = {Summary: Places unique-hard and generalized unique-hard attention within AC0, and shows that averaging-hard attention recognizes the non-AC0 languages MAJORITY and DYCK-1. Relevance: Distinguishes hard-attention fixed-depth models from saturated or soft-attention settings in the upper-bound taxonomy.}
}

@inproceedings{BhattamishraPatelGoyal2020,
  author = {Satwik Bhattamishra and Arkil Patel and Navin Goyal},
  title = {On the Computational Power of Transformers and Its Implications in Sequence Modeling},
  booktitle = {Proceedings of the 24th Conference on Computational Natural Language Learning (CoNLL)},
  pages = {455--475},
  year = {2020},
  doi = {10.18653/v1/2020.conll-1.37},
  comment = {Summary: Analyzes the computational power of Transformers under model-specific assumptions, including Turing completeness with unbounded precision. Relevance: Broadens the universality discussion beyond a single arbitrary-precision construction.}
}

@inproceedings{BhattamishraSimplicityBias2023,
  author = {Satwik Bhattamishra and Arkil Patel and Varun Kanade and Phil Blunsom},
  title = {Simplicity Bias in Transformers and their Ability to Learn Sparse {Boolean} Functions},
  booktitle = {Proceedings of the 61st Annual Meeting of the Association for Computational Linguistics (Volume 1: Long Papers)},
  pages = {5767--5791},
  year = {2023},
  doi = {10.18653/v1/2023.acl-long.317},
  comment = {Summary: Shows Transformers are biased toward low-sensitivity (simple) functions, affecting which formal patterns they generalize. Relevance: Indicates that optimization bias, not only memory structure, shapes the reported partial length generalization.}
}

@inproceedings{ZhangStackTrans2025,
  author = {Kechi Zhang and Ge Li and Jia Li and Huangzhao Zhang and Yihong Dong and Jia Li and Jingjing Xu and Zhi Jin},
  title = {Recursive Transformer: Boosting Reasoning Ability with State Stack},
  booktitle = {Advances in Neural Information Processing Systems 38},
  year = {2025},
  comment = {Summary: Inserts differentiable hidden-state stacks between Transformer layers, with end-to-end learned push and pop, and evaluates on Chomsky-hierarchy and natural-language reasoning benchmarks. Relevance: A depth-wise stack-augmented Transformer, contrasting with the transcript-wise tagged stacks used here. Access: https://openreview.net/forum?id=2bbDg587uh}
}

\end{document}